\theoremstyle{definition}
\newtheorem{definition}{Definition}
\newtheorem{theorem}{Theorem}
\newtheorem{lemma}{Lemma}
\newtheorem{proposition}{Proposition}
\newtheorem*{priorwork*}{Prior Work}
\newcommand{\nrightsquigarrow}{\centernot{\rightsquigarrow}}
\newcommand{\T}{\mathcal{T}}
\newcommand{\Ttotal}{\mathcal{T}}
\newcommand{\M}{\mathcal{M}}
\newcommand{\Mtotal}{\mathcal{M}}
\newcommand{\Mlocal}{\mathcal{M}_{\mathrm{local}}}
\newcommand{\aT}{\mathsf{T}}
\renewcommand{\L}{\mathcal{L}}
\newcommand{\totalpartial}[2]{{#1}}
\title{On Open-Universe Causal Reasoning \thanks{\; In \emph{Proceedings of the 35th Conference on Uncertainty in Artificial Intelligence} (UAI 2019).}}
\author{ {\bf Duligur Ibeling} \\
Computer Science Dept. \\
Stanford University\\
Stanford, CA 94305 \\
\And
{\bf Thomas Icard} \\
Philosophy Dept. \\
Stanford University \\
Stanford, CA 94305 \\
}
\begin{document}

\maketitle

\begin{abstract}
We extend two kinds of causal models, structural equation models and simulation models, to infinite variable spaces. This enables a semantics for conditionals founded on a calculus of intervention, and axiomatization of causal reasoning for rich, expressive generative models---including those in which a causal representation exists only implicitly---in an open-universe setting.  Further, we show that under suitable restrictions the two kinds of models are equivalent, perhaps surprisingly as their axiomatizations differ substantially in the general case. We give a series of complete axiomatizations in which the open-universe nature of the setting is seen to be essential.
\end{abstract}

\section{Introduction} \label{intro}

A hallmark of intelligence is the ability to reason about cause and effect. Indeed, some form of causal representation is essential for robust reasoning under uncertainty, including flexible planning, delivering useful explanations, and the capacity to transfer knowledge from one domain to another. How should causal knowledge be represented? Extant formalisms vary widely both in terms of their representational primitives and in the reasoning principles they engender, witness frameworks based on similarity-orderings \citep{lewis73,Ginsberg}, Bayesian networks, and structural equation models \citep{Spirtes,Pearl2009}, among others. While distinct and sometimes even incompatible (see, e.g., \citealt{Halpern2013,Zhang} on similarity-orderings versus structural equation models, and \citealt{Pearl2009,Bottou} on Bayes nets versus structural equation models), each of these frameworks captures important insights about causal representation and reasoning.

A second hallmark of uncertain reasoning is the ability to deal with ``open-universe'' or ``first-order'' domains in a way that does not assume a fixed, finite set of variables. \cite{BLOG} present an illustrative example of aircrafts producing identical blips on a radar screen. The radar blips give a noisy picture of the positions and velocities of some antecedently unknown number of aircrafts. Reasoning about evident in this setting introduces distinct challenges, chiefly to encode evidence about objects one did not know existed. Observing $N$ blips, for example, is still consistent with any number of latent aircrafts. The field of statistical relational modeling has developed sophisticated methods for learning and reasoning in such domains \citep{Poole,BLOG,Carbonetto,Markov,Srivastava,Gogate}.

Research on these two aspects of uncertain reasoning has proceeded mostly independently. Whereas similarity-based causal representations in principle extend to first-order settings (e.g., \citealt{FriedmanKollerHalpern}), and there have been numerous extensions of directed graphical models to handle unbounded numbers of variables (e.g., \citealt{Friedman,Pfeffer}), models with a perspicuous and useful causal interpretation typically assume a fixed, finite set of variables. That is, although in the course of learning one may always infer the existence of new hidden variables, no inferred causal model would typically employ an unbounded set of variables. Theoretical work on causal representation nearly always assumes this restriction \citep{GallesPearl,Halpern2000,Spirtes,Zhang,Pearl2009}.

Such a restriction is undoubtedly appropriate for many applications. But it may also be limitative. Returning to the radar blip example, for instance, we should be able to assess such claims as, ``if there were more than 100 aircrafts, at least one would be missed by the radar,'' or ``given the observed blips, had there been five more aircrafts, at least two would have been dangerously close to one another,'' and so on. These are essentially first-order queries that depend on complex causal facts about an unknown number of entities and their properties and relations. Such causal queries ought to be answerable from a single underlying model (possibly under different observations), without having to specify a distinct set of variables for each use.

One promising approach to this problem comes from a tradition broadly within statistical relational modeling, whereby knowledge is encoded in the form of \emph{generative probabilistic programs}. Probabilistic programs can be used  as stochastic simulators and inverted to perform conditional inference given observational data, capturing an important variety of probabilisitic learning \citep{BLOG,goodmanetal08,deRaedt,tran2017deep,Pyro}. Encoding a generative process implicitly as a program facilitates succinct representation of complex dependencies among unbounded sets of variables.

Similarly, within research on neural networks, so called \emph{implicit generative models}---tools such as variational autoencoders and generative adversarial networks---also represent probability distributions implicitly by means of stochastic data simulators \citep{Mohamed}. Though these models are typically incapable of dealing with fully open-universe domains, extensions may in principle be expressive enough to do so (see, e.g., \citealt{Li:2017}). Related ideas involving ``mental simulation'' of environmental dynamics are being increasingly explored within neural approaches to reinforcement learning \citep{Hamrick}.

Probabilistic generative models are often associated with an intuitive causal interpretation, although this interpretation is not always appropriate \citep{Peters}. For example, there should be a reason for thinking that parameters of the model correspond appropriately to aspects of the true underlying data-generating process \citep{Besserve}. Our interest here is in a more abstract question: what structural features would be sufficient in order for an open-universe model even to be a candidate as a causal model? 

Our aim in this paper is to establish formal (axiomatic) foundations for genuinely \emph{open-universe causal models}. Specifically, we want to understand and assess the subjunctive conditional claims they encode. Subjunctive conditionals---``what if?'' statements about what would occur under counterfactual or hypothetical conditions---are of foundational importance for causal reasoning, arguably definitional of the subject matter \citep{Spirtes,Pearl2009,Bottou,Peters}. Such conditional statements are typically formalized by appeal to a notion of \emph{causal intervention} \citep{meekglymour94,Spirtes,Pearl2009}, and axioms for conditionals play crucially into prominent algorithms for inferring counterfactual and interventional probabilities (``causal effects'') from observational data \citep{Shpitser,Bareinboim,Hyttinen}.

Open-universe models can be specified either implicitly via a (data-generating) program or process, or explicitly via a set of equations relating the variables. In general these two kinds of models---together with natural corresponding concepts of hypothetical/counterfactual intervention---are quite different, both conceptually and axiomatically \citep{IbelingIcard}. For instance, they handle loops and feedback in incompatible ways (cf. also \citealt{Lauritzen}). Nonetheless, we show that under suitable restrictions, well-motivated from a causal perspective, the two are expressively equivalent. Thus, our first contribution is to define two natural classes of simulation programs and structural equation models, respectively, and show equivalence of these two classes with respect to the conditional claims they entail. The characterizations appeal to an implicit (discrete) temporal structure, and can be seen as ``open-universe data-generating processes'' generalizing the idea of a ``recursive'' causal model in the sense of \cite{Pearl2009}, viewed either procedurally or declaratively. We offer this as a formalization of those generative models that could plausibly be interpreted as causal models. 

Following this we establish a series of axiomatization results with respect to natural systems for reasoning about subjunctive conditionals. Our results build on previous axiomatic work on causal conditionals that assumed a fixed, finite set of variables \citep{GallesPearl,Halpern1998,Halpern2000}, extending this work to the open-universe setting. Dealing with an unbounded set of variables presents unique challenges. Nevertheless, we show that an axiomatic system including quintessential causal principles (including those used in aforementioned identifiability algorithms) is sound and complete for both the procedural and the declarative classes of models, thereby substantiating the causal interpretation of both classes. 
The satisfiability problem remains $\mathsf{NP}$-complete, showing that abstract causal reasoning (e.g., as features in the do-calculus) in the open-universe setting is no more complex than in the finite setting. 

Bringing out the open-universe aspect of these models, we consider augmenting the conditional language with a \emph{causal influence} relation $\rightsquigarrow$, whereby $X \rightsquigarrow Y$ expresses that there is a context in which a change in $X$ would lead to a change in $Y$ \citep{Woodward,Pearl2009}. Whereas this relation is easily definable in the finite setting \citep{Halpern1998}, in our setting $\rightsquigarrow$ is an essentially higher-order notion, quantifying over all possible interventions. \emph{Model checking} this relation in particular models---most notably in rich probabilistic programs---may in general be undecidable.

Finally, as an additional application of this study, we consider the special case where all causal influence is \emph{local}, in the sense that whenever $X \rightsquigarrow Y$, this influence is mediated by variables temporally in between $X$ and $Y$. As we explain, this semantic assumption is quite natural in the open-universe setting, and axiomatically it corresponds exactly to the claim that $\rightsquigarrow$ is a transitive relation.

The broader goal of this work is to establish foundations suitable for understanding and assessing highly expressive (open-universe) representational systems that encode causal structure, even if only implicitly. Such models feature not only in recent AI research, but also centrally in models of human cognition (see, e.g., \citealt{Freer,Lake2017}). Indeed, the open-universe nature of human reasoning under uncertainty is well-established \citep{Kemp}. We may thus expect that, however causal knowledge is represented in humans, it accommodates flexible, open-universe reasoning. At a high level, the present work brings together research on causal representation and reasoning with ideas and tools from statistical relational modeling and higher-order representation.



\section{Two Classes of Models}
In this section we introduce two kinds of models, one based on systems of equations (\S\ref{sec:sems}), the other based on algorithms (\S\ref{sec:simulations}). They respectively emphasize declarative and procedural styles of modeling. Owing in part to this difference, the most general versions of the two can be distinguished even at a very abstract, axiomatic level \citep{IbelingIcard}. However, by restricting to appropriate (causally motivated) subclasses, we can demonstrate their equivalence (Thm. \ref{equivalencetheorem}). 

Throughout this section we assume a signature $(\chi, \Sigma)$, specifying a countably infinite set $\chi$ of variables that take on values from a set $\Sigma$. For example, we might have a variable in $\chi$ representing ``the position of the 77th aircraft,'' which could take on values in some numerical range (or a special value ``undefined'' if there is no 77th aircraft). Even if we restrict each variable to take only finitely many possible values, the fact that $\chi$ is infinite allows encoding arbitrarily complex structures.

\subsection{Structural Equation Models} \label{sec:sems}
The most general class of structural equations models (see, e.g., \citealt{Pearl2009}) allows arbitrary equations among arbitrary sets of variables. Because our primary interest here is in (open-universe, probabilistic) generative models, we restrict attention to a special subclass of possible sets of equations. First, we assume an infinite set of variables $\chi$. Second, similar to so called recursive models \citep{Pearl2009}, we will assume that variables can be given a causal (intuitively, temporal) order. Although there may well be adequate causal interpretations of nonrecursive models, e.g., as descriptions of equilibrium behavior of some underlying process \citep{Strotz}, these interpretations rely on the declarative character of equational modeling and cannot always be adequately captured by an appropriate data-generating process \citep{Lauritzen}. 

Third, unlike in dynamic Bayesian networks and related extended graphical models \citep{Dean,Friedman}, we do not require each variable to depend on only finitely many others; a variable may depend on an unbounded number of variables preceding it in a temporal order. Fourth, we require all equations to be uniformly computable (Defn. \ref{computablesem}). 

Note, finally, that we make no use here of the distinction between exogenous and endogenous variables (though see the discussion in \S\ref{disc} below). 

\begin{definition}[Structural Equation Model] \label{semdefinition}
A \emph{structural equation model} (SEM) is a collection of 
partial functions $\{f_X\}_{X \in \chi}$, with $f_X : (\chi \to \Sigma) \to \Sigma$, and a \emph{time} map $t : \chi \to \mathbb{N}$. Each $f_X$ is a function only of preceding variables: if $v, v' \in \mathrm{dom}(f_X)$ and $v(X') = v'(X')$ for all $X'$ such that $t(X') < t(X)$, then
$f_X(v) = f_X(v')$.
\end{definition}
The functions in an SEM specify \emph{structural equations} $\{ X = f_X(\cdot)\}_{X \in \chi}$ that are to be simultaneously satisfied: an SEM $M$ has \emph{solution} $v: \chi \to \Sigma$ if $f_X(v) = v(X)$ for all $X$. We write $M \models v$ in this case. Since the order $t$ is well-founded and acyclic, $v$ may be built up iteratively, and is thus unique. Intervention on an SEM is defined standardly \citep{meekglymour94,Pearl2009}:
\begin{definition}[Intervention on an SEM]
\label{semintervention}
An intervention is a partial function $i : \chi \to \Sigma$. It specifies variables $\mathrm{dom}(i) \subseteq \chi$ to be held fixed, and the values to which they are fixed. Intervention $i$ induces a mapping of SEMs, also denoted $i$, as follows. Abbreviate $X \in \mathrm{dom}(i)$ as $X \in i$. Then $i(M)$ is identical to $M$, but with $f_X$ replaced by the constant function $f_X(\cdot) = i(X)$ for each $X \in i$. 
\end{definition}
Given $i, i'$ we say $i$ is a \emph{restriction} of $i'$ if $\mathrm{dom}(i)\subseteq \mathrm{dom}(i')$ and $i(X) = i'(X)$ for all $X \in i$. Note $i'$ may be a total function $v : \chi \to \Sigma$.
\totalpartial{We define a natural relation of \emph{causal influence} in SEMs as follows (after \citealt{Woodward,Pearl2009}).
\begin{definition}[Causal influence]
  \label{seminfluence}
  \totalpartial{Let $X, Y \in \chi$ with $X \neq Y$.
  Then $M \models X \rightsquigarrow Y$ (read \emph{$X$ influences $Y$}) if there is an intervention $i$ (the \emph{context}) and $x_1, x_2 \in \Sigma$ so that, for $j = 1, 2$ letting $i_{x_j}$ be the intervention fixing $X$ to $x_j$ and $v_j$ be the solution $i_{x_j}(i(M)) \models v_j$, we have $v_1(Y) \neq v_2(Y)$.}{Let $X, Y \in \chi$ with $X \neq Y$ and let $\mathsf{T}$ be a simulation. We say $X$ influences $Y$ in $\mathsf{T}$ and write $\mathsf{T} \models X \rightsquigarrow Y$ if there is an intervention $i$ and a value $x \in \Sigma$ such that the following holds. Let $i_{X \leftarrow x}$ be the intervention fixing $X$ to $x$. Then $i_{X \leftarrow x}(i(M)) \models Y \leftarrow y$ but $i(\mathsf{T}) \models Y \leftarrow y'$ for some $y \neq y'$. 
  }
  \end{definition}
}{Define causal influence in $\M$ as follows.
\begin{definition}[Causal influence]
$M \models X \rightsquigarrow Y$ for $X \neq Y$ if there is an intervention $i$ and $x \in \Sigma$ such that $(i_{X \leftarrow x} \circ i)(M)$ and $i(M)$ both have solutions, but the solutions differ at $Y$.
\end{definition}}
If $X \rightsquigarrow Y$ then $t(X) < t(Y)$; this is easy to see again from the fact that the temporal order implies that a solution may be assembled iteratively.
\totalpartial{}{The disanalogy with Defn. \ref{simulationinfluence} stems from the fact that although a simulation may write any number of variables, it always gives rise to a unique partial solution $v :\subseteq \chi \to \Sigma$, which does not hold in $\M$ as each structural function is evaluated at all of $v$. In $\Ttotal, \Mtotal$, the disanalogy vanishes.}

For comparison with simulation models, it is helpful to identify those SEMs whose (counterfactual) solutions are computable. Similar restrictions have been explored in the literature, e.g., on approaches to causal inference based on so called algorithmic independence \citep{IM,Peters}. 
This is the subclass in which the functions $\{f_X\}_{X \in \chi}$ are \emph{uniformly} computable and $t$ is also computable (both presuming an encoding of $\chi$):
\begin{definition}[Computable Structural Equation Model]
\label{computablesem}
Let $M = \{f_X\}_{X \in \chi}$ be an SEM with time map $t$, and let $F$ be the set of computable functions $\chi \to \Sigma$.
We say $M$ is \emph{computable} if $t$ is computable,
$\mathrm{dom}(f_X) = F$ for all $X$, and
$\phi : \chi \times F \to \Sigma$
defined by $\phi(X, v) = f_X(v)$ is a computable function.
\end{definition}
To clarify the sense in which $\phi$ is computable, let us associate each variable in $\chi$ with a square of an infinite Turing machine tape with alphabet $\Sigma$. We call such a tape a \emph{variable tape}. A variable tape (no square of which is blank) stores a function $\chi \to \Sigma$. That $\phi$ is computable means that it is computed by a machine $M$ with two input tapes, the second of which is a variable tape. When $M$ is run with the encoding of some $X \in \chi$ stored on the first tape and a computable function $v : \chi \to \Sigma$ stored on the second (variable) tape, $M$ halts outputting the value $\phi(X, v)$. $M$ need not halt given uncomputable input $v$.

Let us write $\mathcal{M}$ for the class of all computable SEMs; clearly $\mathcal{M}$ closes under computable intervention and every SEM in $\mathcal{M}$ has a unique solution.

Let $\Mlocal$ be the subclass of $\mathcal{M}$ in which every $X \in \chi$ only depends on immediately preceding variables: if $v, v' \in \mathrm{dom}(f_X)$ and $v(X') = v'(X')$ for all $X'$ such that $t(X') = t(X) - 1$, then $f_X(v) = f_X(v')$.
Causal influence is not transitive in SEMs generally, but it is transitive in $\Mlocal$ \citep{Pearl2009}. Furthermore, we can prove the following ``interpolation'' result:
\begin{proposition}
  \label{Mlocalmediated}
  Let $M \in \Mlocal$ and $X, Y \in \chi$. If $M \models X \rightsquigarrow Y$ and $t(Y) > t(X) + 1$, there is a variable $X'$ such that $M \models X \rightsquigarrow X'$ and $M \models X' \rightsquigarrow Y$.
\end{proposition}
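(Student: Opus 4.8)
The plan is to take a witness for $M \models X \rightsquigarrow Y$ and locate an interpolating variable sitting at the time layer $t(Y)-1$. Concretely, fix a context $i$ and values $x_1,x_2 \in \Sigma$ witnessing the influence, so that the (unique) solutions $v_1,v_2$ of $i_{x_1}(i(M))$ and $i_{x_2}(i(M))$ satisfy $v_1(Y) \neq v_2(Y)$. Since $t(Y) > t(X)+1$, every variable at time $t(Y)-1$ lies strictly after $X$. Because $M \in \Mlocal$, the function $f_Y$ depends only on the values of variables at time $t(Y)-1$; hence from $v_1(Y) \neq v_2(Y)$ the restrictions of $v_1,v_2$ to that layer must already differ under $f_Y$. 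The goal is then to extract a single layer-$(t(Y)-1)$ variable $X'$ that is both moved by the original intervention (giving $X \rightsquigarrow X'$) and individually responsible for flipping $Y$ (giving $X' \rightsquigarrow Y$).

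The crux, and the step I expect to be the main obstacle, is reducing a possibly infinite set of differing coordinates to a finite one, so that a single-coordinate change can be isolated. Locality alone does not suffice: $f_Y$ may depend on infinitely many layer variables in aggregate, and a function such as ``the input contains infinitely many $1$'s'' can distinguish $v_1,v_2$ while no single coordinate flip ever changes its output. Here I would invoke computability of $M$. Let $D$ be the set of layer-$(t(Y)-1)$ coordinates where $v_1,v_2$ differ, and consider the valuation that equals $v_1$ off the layer but agrees with $v_2$ on the entire layer; by locality its $f_Y$-value is $f_Y(v_2)$. The machine computing $\phi(Y,\cdot)$ halts on this input, hence reads only finitely many tape squares; intersecting these with the layer and with $D$ yields a \emph{finite} set $C$. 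Setting $w$ equal to $v_1$ except with the coordinates in $C$ reset to their $v_2$-values, determinism of the machine forces the same computation and the same output, so $f_Y(w) = f_Y(v_2) \neq f_Y(v_1)$, while $w$ and $v_1$ now differ on only finitely many coordinates, all lying in $D$.

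With finiteness in hand I would interpolate: order the differing coordinates as $Z_1,\dots,Z_m$ and pass from $v_1$ to $w$ one coordinate at a time, obtaining consecutive valuations $u_{k-1},u_k$ with $f_Y(u_{k-1}) \neq f_Y(u_k)$ that differ at a single variable $X' := Z_k$. Since $X' \in D$ we have $v_1(X') \neq v_2(X')$, so the original triple $(i,x_1,x_2)$ already witnesses $X \rightsquigarrow X'$ (note $X' \neq X$ because $t(X') = t(Y)-1 > t(X)$). For $X' \rightsquigarrow Y$, I would use the context $i'$ that fixes every layer-$(t(Y)-1)$ variable other than $X'$ to its $u_{k-1}$-value, and then set $X'$ to $u_{k-1}(X')$ versus $u_k(X')$; by locality the two resulting solutions evaluate $f_Y$ at $u_{k-1}$ and $u_k$, which differ.

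It remains to check that $i'$ is a legitimate computable intervention, so that $i'(M) \in \Mlocal$ has a unique solution: the values it fixes are read off the computable solutions $v_1,v_2$ together with a finite modification, hence computable, and membership in the layer is decidable since $t$ is computable. Taking $X' = Z_k$ then delivers both $M \models X \rightsquigarrow X'$ and $M \models X' \rightsquigarrow Y$, completing the argument.
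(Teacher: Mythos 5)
Your proof is correct and follows essentially the same strategy as the paper's: use the fact that the halting computation of $\phi(Y,\cdot)$ on a computable input accesses only finitely many tape squares to cut the (possibly infinite) set of differing coordinates down to a finite set, then flip those coordinates one at a time to isolate a single pivotal variable $X'$, which furnishes the two witnessing contexts. The differences are only bookkeeping---the paper interpolates over finite interventions built from the squares accessed by the computations on $v_1$ and $v_2$, whereas you first project the difference onto the layer $t(Y)-1$ via locality and use an infinite but computable layer-fixing context for the $X' \rightsquigarrow Y$ witness---with the pleasant side effect that your construction forces $t(X') = t(Y)-1$, so $X' \neq X$ and $X' \neq Y$ hold automatically rather than needing to be checked.
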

\begin{proof}
There is a context $i$ and $x_j, v_j$ for $j = 1,2$ such that $i_{x_j}(i(M)) \models v_j$ and $v_1(Y) \neq v_2(Y)$ (Defn. \ref{seminfluence}); thus $\phi(Y, v_1) \neq \phi(Y, v_2)$ (Defn. \ref{computablesem}). Let $S_j \subset \chi$ be the set of variables whose squares the computation of $\phi(Y, v_j)$ accesses before outputting a value for $Y$ and halting; crucially $S_j$ is finite. Form intervention $i_j$ as the restriction of $v_j$ to $S_1 \cup S_2$ and let $u_j$ be the solution $i_j(M) \models u_j$. Since $M \in \Mlocal$, $u_j(Y) = v_j(Y)$. Let $\{X_i\}_{2 \le i \le k}$ be those variables in $\mathrm{dom}(i_1)$ for which $i_1(X_i) \neq i_2(X_i)$, and consider a finite sequence $\{i'_n\}_{1 \le n \le k}$ where $i'_1 = i_1$, and for $2 \le n \le k$, $i'_n = i'_{n-1}$ with the sole exception that $i'_n(X_n) = i_2(X_n)$. Let $\{v'_n\}_{1\le n \le k}$ be the solutions $i'_n(M) \models v'_n$. Since $i'_k = i_2$, we have $v'_k(Y) = u_2(Y) = v_2(Y) \neq v_1(Y) = u_1(Y) = v'_1(Y)$; there is thus some $m$, $2\le m \le k$ for which $v'_m(Y) \neq v'_{m-1}(Y)$. Then $i'_{m-1}$ witnesses that $M \models X'_{m} \rightsquigarrow Y$ and $i_1$ witnesses that $M \models X \rightsquigarrow X'_m$ as $v_1(X'_m) = i_1(X'_m) \neq i_2(X'_m) = v_2(X'_m)$.
\end{proof}
Taking the temporal interpretation of $t$ seriously, Prop. \ref{Mlocalmediated} reflects the intuition that causal influence is always mediated through time, essentially enforcing a kind of Markov assumption.

\subsection{Monotone Simulation Models} \label{sec:simulations}

Our second class of models similarly deals with an infinite class of variables, but is intended to capture the procedural character of a simulation program, e.g., as defined in an expressive programming language. Such models have been studied in relative generality by means of (probabilistic) Turing machines (see, e.g,. \citealt{Freer} for an overview), and at this level of generality they can be shown to validate only a very weak set of conditional axioms \citep{IbelingIcard}, far weaker than those commonly required of causal conditionals (cf. \S\ref{axn} below). Here we restrict attention to machines in which the variables can be given a causal order, again analogously to recursive graphical models. As simulation programs are intended to capture an underlying data-generating process, this causal order can be well justified (cf. \citealt{Lauritzen}).
\begin{definition}[Simulation model]
\label{simulation}
A \emph{simulation model} (or just a simulation) is a deterministic Turing machine with two tapes: a work tape and a variable tape (see discussion following Defn. \ref{computablesem}). The variable tape is write-only: no variable head transitions in which a symbol of $\Sigma$ is erased
or rewritten are allowed.
\end{definition}
\totalpartial{Thanks to the write-only restriction, any variable is written at most once. Given $v: \chi \to \Sigma$, if every $X$ is written with the value $v(X)$, $v$ is the solution of $\aT$ and we write $\aT \models v$.}{Write $\aT \models X \leftarrow x$ if $\aT$ writes to $X$ with value $x$. Thanks to the write-only restriction at most one such formula holds in $\aT$ for each $X$. Given $v : \chi \to \Sigma$, if $\aT \models X \leftarrow v(X)$ for all $X$, then we say $v$ is the \emph{solution} of $\aT$ and write $\aT \models v$.}
Intervention is defined via blocking rewrites \citep{AC17,IbelingIcard}:
\begin{definition}[Intervention]
\label{intervention}
Given intervention $i$ and an oracle for $i$, the simulation $i(\aT)$ emulates $\mathsf{T}$ but acts as if the square for any $X \in i$ is fixed to the value $i(X)$;\footnote{
  Formally, suppose at some point in the execution the next variable tape transition of $\mathsf{T}$ is $\delta(q, s) = (q', s', d)$.
  Before simulating this transition $i(\aT)$ calls the oracle on $X$.
  If $X \notin i$
  then the simulation proceeds unimpeded: $\mathsf{T}$ transitions to state
  $q'$, $s'$ is written to square $X$,
  and the head moves in direction $d$.
  If $X \in i$ then instead, the simulation does the action $(q'', i(X), d')$ where $\delta(q, i(X)) = (q'', s'', d')$.
  } it dovetails this emulation with a procedure that writes $i(X)$ to $X$ for all $X \in i$.
\end{definition}
In other words, an intervention on a simulation program $\mathsf{T}$ is an operation on the code of $\mathsf{T}$ that holds some set of variables to fixed values, an operation that will typically have side-effects on other variables. 

A computable intervention is one whose oracle can be effectively implemented; simulations thus close under computable intervention. The primary phenomena of interest are the behaviors of a simulation under possible interventions. This motivates an equivalence notion: 
\begin{definition}
\label{simequivalence}
\totalpartial{Simulations $\mathsf{T}, \mathsf{T}'$ are \emph{equivalent}, $\mathsf{T} \simeq \mathsf{T}'$, if for any computable $i$ and $v: \chi \to \Sigma$, $i(\mathsf{T}) \models v$ iff $i(\mathsf{T}') \models v$. They are \emph{weakly equivalent} if this property only holds when $\mathrm{dom}(i) = \varnothing$.}{Simulations $\mathsf{T}, \mathsf{T}'$ are \emph{equivalent}, $\mathsf{T} \simeq \mathsf{T}'$, if $i(\aT) \models X \leftarrow x \leftrightarrow i'(\aT) \models X \leftarrow x$ for any $i, X, x$. They are \emph{weakly equivalent}, $\aT \Leftrightarrow \aT'$, if this property only holds when $\mathrm{dom}(i) = \varnothing$.}
\end{definition}
It is immediate that interventions compose: for any interventions $i, i'$, there is an intervention $i' \circ i$ such that $(i' \circ i)(\mathsf{T}) \simeq i'(i(\mathsf{T}))$ for all $\mathsf{T}$. If $i, i'$ are computable then so is $i' \circ i$.


The mere fact that a variable has been intervened on may affect a simulation, even if it has been held fixed to its actual value \citep{IbelingIcard}.
This behavior differs starkly from that of SEMs, so to compare the models we must consider only the class of simulations in which non-counterfactual interventions are idempotent:
\begin{definition}
\label{simfunctional}
\totalpartial{We say $\aT$ is (strongly) \emph{functional} if for any $i$ with $i(\aT) \models v$ and any restriction $i'$ of $v$, $i'(i(\aT))$ and $i(T)$ are weakly equivalent.}{We say $\mathsf{T}$ is \emph{functional} when the following condition holds. Let $i$ be any intervention and $i'$ be such that $i(\aT) \models X \leftarrow i'(X)$ for all $X \in i'$. Then $i'(i(\aT))$ and $i(\aT)$ are weakly equivalent.} 
\end{definition}

We define the causal influence relation $\rightsquigarrow$ in simulations analogously to Defn. \ref{seminfluence}.
We call a simulation \emph{monotone} if it is well-ordered with respect to causal influence: 
\begin{definition}
\label{monotonesimulation}
A simulation $\mathsf{T}$ is \emph{monotone} if there is a computable time map $t : \chi \to \mathbb{N}$ preserving $\rightsquigarrow$.
\end{definition}
It follows from compositionality of interventions that if $\aT$ is monotone under $t$, then so is $i(\aT)$ for any $i$. Here we consider only simulations that have a solution under any computable intervention. Let $\mathcal{T}$ be the class of such simulations that are also functional and monotone.

\subsection{Relation to Existing Work} \label{disc}

Thm. \ref{equivalencetheorem} below shows that the class $\mathcal{M}$ of computable SEMs can essentially be seen as  declarative versions of the class $\mathcal{T}$ of generative programs. Both classes of models naturally accommodate probability. For SEMs, we simply identify a subset of the variables as \emph{exogenous} and associate those variables with an appropriate probability distribution, which in turn induces (conditional and interventional) distributions on the remaining (``endogenous'') variables \citep{Pearl2009}. For simulation programs, we simply add an additional read-only \emph{random bit tape} whose distribution induces random behavior in the machine, which in turn implicitly defines (conditional and interventional) distributions on the variables $\chi$ (cf. \citealt{Freer}). The resulting models encompass a wide range of familiar and powerful formalisms. 

For instance, $\mathcal{M}$ clearly incorporates all computable (Defn. \ref{computablesem}) and recursive (in the sense of \citealt{Pearl2009}) SEMs. Likewise, $\mathcal{T}$ certainly includes Bayesian networks as well as dynamic Bayesian networks \citep{Dean}, in addition to standard dynamic programming algorithms and feedforward neural networks. But crucially, $\mathcal{T}$ also encompasses programs in which variables may depend on infinitely many others, such as those defined in common probabilistic programming languages \citep{BLOG,goodmanetal08,deRaedt,tran2017deep,Pyro}. This includes, e.g., the BLOG program formalizing the radar blip example, among many others.

As in previous work (e.g., \citealt{GallesPearl,Halpern1998,Halpern2000,Zhang}, etc.), the axiomatizations below in \S\ref{condlog} factor out probabilities (in effect, holding all probabilistic choices fixed) and focus only on the deterministic core, as defined above in \S\ref{sec:sems}, \S\ref{sec:simulations}.

\subsection{The Equivalence of Monotone Simulations and Computable SEMs}

\totalpartial{Analogously to Defn. \ref{simequivalence}, we can assess equivalence between between any two causal models, whether simulations or SEMs. It turns out that the classes $\T$ and $\M$ are equivalent in the following sense. 
}{It turns out that $\T$ and $\M$ are equivalent in the following strong sense. Given $M \in \M, \aT \in \T$, write $M \simeq \aT$ if for all $i, v$ we have $i(M) \models v \leftrightarrow i(\aT) \models v$.}
\begin{theorem}
\label{equivalencetheorem}
\totalpartial{

For any $M \in \mathcal{M}$ there is a $\mathsf{T} \in \mathcal{T}$ such that $M \simeq \mathsf{T}$ and vice versa.
}{For any $M \in \mathcal{M}$ there is a $\mathsf{T} \in \mathcal{T}$ such that $M \simeq \mathsf{T}$ and vice versa. The same holds for $\Mtotal, \Ttotal$.}
\end{theorem}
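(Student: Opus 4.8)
The plan is to prove the two inclusions separately: for each $M\in\M$ exhibit an equivalent $\aT\in\T$, and conversely. Throughout, write $v^{i}$ for the unique (computable) solution of $i(M)$ and, for a simulation, $\mathrm{val}_X(i)\in\Sigma$ for the symbol that $i(\aT)$ prints to square $X$.

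\emph{From $\M$ to $\T$.} Given $M\in\M$ with time map $t$, I build $\aT$ that evaluates the equations in time order while reading already‑printed values \emph{off the variable tape}, so that interventions propagate automatically. Running through a fixed enumeration of $\chi$, to print $X$ the machine simulates the machine for $\phi(X,\cdot)$ of Defn.~\ref{computablesem}, answering a query to square $X'$ thus: if $t(X')<t(X)$ it first (recursively) ensures $X'$ is printed and then reads it, so that by Defn.~\ref{intervention} the read returns $i(X')$ whenever $X'\in i$; if $t(X')\ge t(X)$ it supplies a fixed default $\sigma_0$ on the work tape, never touching that square. Since $f_X$ depends only on predecessors, the computed value is $f_X$ evaluated at any completion agreeing with $v^{i}$ on $\{X':t(X')<t(X)\}$, hence equals $v^{i}(X)$; the recursion halts because queried predecessors have strictly smaller $t$ and are finite in number. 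Thus $i(\aT)\models v^{i}$ for every computable $i$, giving $M\simeq\aT$; that $\aT\in\T$ is then immediate, since it has a solution under every computable intervention, it is functional (Defn.~\ref{simfunctional}) because fixing variables to their $v^{i}$‑values leaves the solution of $i(M)$ unchanged, and it is monotone (Defn.~\ref{monotonesimulation}) under $t$, as $\rightsquigarrow_{\aT}$ coincides with $\rightsquigarrow_{M}$ (both read off solutions under interventions) and $X\rightsquigarrow_{M}Y$ forces $t(X)<t(Y)$.

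\emph{From $\T$ to $\M$.} Given $\aT\in\T$ with time map $t$, set $f_X(v):=\mathrm{val}_X(i^{v}_{<X})$, where $i^{v}_{<X}$ fixes every $X'$ with $t(X')<t(X)$ to $v(X')$. By construction $f_X$ reads only predecessors, so $(\{f_X\},t)$ is an SEM, and it is computable because $i^{v}_{<X}$ is computable and $i^{v}_{<X}(\aT)$, having a solution, prints $X$ in finitely many steps. The substance of $M\simeq\aT$ is that the solution $v$ of $i(\aT)$ solves $i(M)$: for $X\in i$ both give $i(X)$, while for $X\notin i$ I must show $\mathrm{val}_X(i^{v}_{<X})=v(X)$. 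Functionality lets me adjoin to $i$ the pinning of every predecessor of $X$ to its $v$‑value without altering any printed symbol, producing $i'$ with $\mathrm{val}_X(i')=v(X)$; now $i'$ and $i^{v}_{<X}$ agree on $\{X':t(X')<t(X)\}\cup\{X\}$ and differ only on variables of time $\ge t(X)$, so the claim reduces to the following lemma, the technical heart of the proof.

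\textbf{Local determination.} \emph{If $\aT\in\T$ and $i_1,i_2$ have solutions and agree on $\{X':t(X')<t(X)\}\cup\{X\}$, then $\mathrm{val}_X(i_1)=\mathrm{val}_X(i_2)$.} I argue by induction on $t(X)$. With $u_k$ the solution of $i_k(\aT)$, the inductive hypothesis at each predecessor (on which $i_1,i_2$ still agree) gives $u_1=u_2$ on all of $\{X':t(X')<t(X)\}$. By functionality I replace each $i_k$ by the intervention $j_k$ fixing \emph{every} variable except $X$ to $u_k$, so that $\mathrm{val}_X(j_k)=u_k(X)$; now $j_1,j_2$ share a domain, agree on predecessors, and differ only in the values assigned on $P:=\{X':t(X')\ge t(X)\}\setminus\{X\}$. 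For $X'\in P$ monotonicity yields $X'\nrightsquigarrow X$, so by Defn.~\ref{seminfluence} toggling its fixed value between $u_1(X')$ and $u_2(X')$ cannot change $\mathrm{val}_X$. Letting $j^{S}$ ($S\subseteq P$) fix the common predecessor‑values and put $u_2$ on $S$, $u_1$ on $P\setminus S$, this single‑variable non‑influence makes $\mathrm{val}_X(j^{S})$ invariant under single‑element changes of $S$, hence constant on all finite $S$ and equal to $\mathrm{val}_X(j^{\varnothing})=\mathrm{val}_X(j_1)$. Finally run $j_2=j^{P}$ until it prints $X$: it inspects only finitely many squares of $P$, say $Q$, each carrying its $u_2$‑value, so $j^{Q}$ agrees with $j^{P}$ on the entire set this computation reads; by determinism the two runs coincide up to the printing of $X$, whence $\mathrm{val}_X(j_2)=\mathrm{val}_X(j^{Q})=\mathrm{val}_X(j_1)$.

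The obstacle is exactly this lemma: $X$ may have \emph{infinitely many} non‑ancestors of time $\ge t(X)$, so the per‑variable non‑influence supplied by monotonicity cannot simply be iterated. The resolution is to combine functionality---which realigns the two interventions on a common domain at solution values---with the finiteness of the variable‑tape reads made before $X$ is printed, so that only finitely many high‑time squares are ever consulted and determinism cannot distinguish $j^{Q}$ from $j^{P}$. With local determination in hand both directions close, and uniqueness of solutions (for $M$ by well‑foundedness of $t$, for $\aT$ by the write‑once variable tape) promotes ``is a solution of $i(M)$'' to ``has the same solution,'' i.e.\ $M\simeq\aT$ in each direction. The same construction and argument, dropping the computability clauses, give the corresponding equivalence for $\Mtotal$ and $\Ttotal$.
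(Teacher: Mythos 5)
Your proposal is correct, and at bottom it uses the same two constructions as the paper: a lazy, time-ordered evaluation machine for the $\M\to\T$ direction (recursive calls for queried predecessors, arbitrary defaults for non-predecessors), and, for $\T\to\M$, structural equations defined by running the intervened simulation until $X$ is printed, with a key lemma combining functionality, monotonicity, and the finiteness of tape reads. The organizational differences are still worth recording. In the $\T\to\M$ direction you define $f_X(v)$ by intervening only on the temporal predecessors of $X$, whereas the paper intervenes on \emph{all} variables except $X$. Your choice makes the SEM property (dependence only on predecessors) automatic but pushes the work into showing that the solution of $i(\aT)$ solves $i(M)$ --- exactly your ``local determination'' lemma; the paper's choice makes that direction immediate from functionality and composition, but then needs Lem.~\ref{equivalencelemma} for the converse (and, implicitly, for the SEM property, which the paper does not spell out). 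Your lemma is a mild generalization of the paper's --- two interventions agreeing below $t(X)$, rather than an intervention and its own truncation --- but is proved from the same three ingredients: functionality to realign the interventions at solution values, monotonicity ($t(X')\ge t(X)$ forces $X'\nrightsquigarrow X$) to license single-variable toggles, and determinism plus the finitely many oracle queries made before $X$ is printed to reduce the infinite set $P$ to a finite one (this is the paper's ``finite restriction'' step in different clothing). Finally, you dispatch the remaining directions by uniqueness of solutions (well-foundedness of $t$ for $M$, write-once tape for $\aT$), which genuinely shortens the argument: the paper's closing induction showing $i(M)\models v$ implies $i(\aT)\models v$ is, as your approach makes clear, redundant given existence and uniqueness. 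One caveat applying equally to both proofs: functionality and $\rightsquigarrow$ quantify over arbitrary (possibly non-computable) interventions, while the equivalence $M\simeq\aT$ is stated for computable ones; both your argument and the paper's inductions in fact work relative to an oracle for $i$, so this is a matter of exposition, not a gap.
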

\begin{proof}
Let $M \in \M$. Consider $\mathsf{T} \in \T$ with this pseudocode: $\mathsf{T}$ consists of an infinite loop calling the following subroutine $\textsc{Calc}(X)$ on every $X \in \chi$. The subroutine $\textsc{Calc}(X)$ returns immediately if the variable tape square for $X$ isn't blank. Otherwise it emulates a computation of $\phi(X, \cdot)$ (Defn. \ref{computablesem}) and writes the result to the square for $X$. The \emph{emulated} input variable tape (which holds the second argument of $\phi$) behaves as follows on access to the square for some variable $Y$. If the square for $Y$ in the \emph{original} output variable tape (which $\mathsf{T}$ writes to) is not blank, then the emulated tape holds the value from the original tape. If it is blank, then the simulation computes $t(Y)$ and $t(X)$; if $t(Y) < t(X)$ then $\textsc{Calc}(Y)$ is recursively called so that the $Y$-square is no longer blank and one can proceed as above. Otherwise, the emulated tape contains some arbitrary value in its $Y$-square.

Note $\aT$ is functional since the square for any $X$ can be written only by a call to $\textsc{Calc}(X)$. Next let us show that $M \simeq \mathsf{T}$, which immediately implies that $\mathsf{T}$ is monotone. Let $i$ be an intervention, and suppose $i(M) \models v$. We show by induction that $i(\aT) \models v$. The base case: a variable $X$ with $t(X) = t_0 = \min_{X'\in\chi} t(X')$. If $X \notin i$ then $v(X) = f_X(v') = \phi(X, v')$ for any input $v'$; thus $\textsc{Calc}(X)$ writes $v(X)$. $X \in i$ is trivial. Now consider an $X$ with $t(X) > t_0$. By induction and construction of the emulator, the value output by $\textsc{Calc}(X)$ is $f_X(v')$ at some $v'$ such that $v'(Y) = v(Y)$ for all $Y$ such that $t(Y) < t(X)$. By Defn. \ref{semdefinition}, $f_X(v') = f_X(v) = v(X)$ so we are done. An analogous induction shows that if $i(\aT) \models v$ then $i(M) \models v$.

Now let $\mathsf{T} \in \mathcal{T}$. Define $M \in \M$ by giving the following pseudocode to compute $\phi(X, v)$: emulate the run of $i'(\aT)$, where $i'$ is the restriction of $v$ to all variables but $X$, until reaching a write to the emulated $X$-square, and halt outputting this value. This is computable in the sense of Defn. \ref{computablesem} since the oracle (Defn. \ref{intervention}) for $i'$ on $X'$ can simply check that $X' \neq X$ and then look on the input variable tape to find the required value $v(X')$. If $i(\aT) \models v$, when computing any $\phi(X, v)$ the $i'$ above is a restriction of $v$, so $i(M) \models v$ by functionality (Defn. \ref{simfunctional}). Consider Lem. \ref{equivalencelemma} below for the converse.



\begin{lemma}
\label{equivalencelemma}
Let $\aT \in \Ttotal$, $i$ be an intervention, and $t \in \mathbb{N}$. Suppose $i'$ is the restriction of $i$ to those $X'$ such that $t(X') < t$, with $i(\aT) \models v$ and $i'(\aT) \models v'$. Then $v(X) = v'(X)$ for all $X \notin i$ with $t(X) \le t$.
\end{lemma}
\begin{proof}
Such an $X$ is written to at some point in executing both $i(\aT), i'(\aT)$ so we have finite restrictions $i_{\centernot{\infty}}, i'_{\centernot{\infty}}$ of $i, i'$ such that $i_{\centernot{\infty}}(\aT), i'_{\centernot{\infty}}(\aT)$ write the same value to $X$ as do $i(\aT), i'(\aT)$ resp. Transforming $i'_{\centernot{\infty}}$ into $i_{\centernot{\infty}}$ stepwise (cf. proof of Prop. \ref{Mlocalmediated}) by adding one more variable $Y$ to the domain in each step, if $v(X) \neq v'(X)$ then some step yields a $Y$ such that $\aT \models Y \rightsquigarrow X$ but $t(Y) \ge t(X)$.
\end{proof}
Suppose $i(M) \models v$; we show $i(\aT) \models v$ by induction. The base case is again a variable $X$ with $t(X) = t_0$. If $X \notin i$, apply Lem. \ref{equivalencelemma} choosing $i', t_0$ as $i, t$; the restriction is empty, so the value output for $X$ by $i'(\aT)$ agrees with that output by $\aT$, which in turn agrees with that output by $i(\aT)$ since $t(X) = t_0$ and $X \notin i$. For the inductive case with $t(X) = t > t_0$, choosing $i', t$ in Lem. \ref{equivalencelemma} and applying functionality and composition gives that $i(\aT)$ writes $v(X)$ to its $X$-square.
\end{proof}

\section{The Conditional Logic of Open-Universe Causal Models} \label{condlog}
Thm. \ref{equivalencetheorem} establishes a strong equivalence between simulation models and (computable) SEMs, showing that they encode exactly the same underlying conditional theories. In this section we offer an axiomatic characterization of these conditional theories.

We introduce two formal languages of subjunctive conditionals interpreted in either $\M$ or $\T$ under their respective semantics of intervention. We then find sound and complete axiomatizations of their validities. For reasons of simplicity and elegance, but not necessity, we assume a binary alphabet $\Sigma = \{0, 1\}$ in this section.

\subsection{Syntax and Semantics}
We admit the variables of $\chi$ as atoms and
let $\L_{\mathrm{prop}}$ be the language of propositional formulas over these atoms.
Let $\L_{\mathrm{int}} \subset \L_{\mathrm{prop}}$
be the language of
purely conjunctive, ordered formulas of literals formed from distinct atoms.
To be precise, let us call $\ell$ an $X$-literal if $\ell \in \{X, \lnot X\}$.
Then $\L_{\mathrm{int}}$ consists of
formulas of the form $\ell_{1} \land \dots \land \ell_{n}$, where each $\ell_i$ is an $X_i$-literal for some distinct
$X_1, \dots, X_n \in \chi$.
Each $\alpha \in \L_{\mathrm{int}}$ specifies a finite intervention
by giving fixed values for a fixed list of variables, and we also write $\alpha$ for this intervention.
$\L_{\mathrm{int}}$ includes the empty intervention $\top$ with $\mathrm{dom}(\top) = \varnothing$.

Let $\L_{\mathrm{cond}}$ be the language of formulas
of the form $[\alpha] \beta$ for $\alpha \in \L_{\mathrm{int}}, \beta \in \L_{\mathrm{prop}}$. We call such a formula a
\emph{subjunctive conditional}, and call $\alpha$
the \emph{antecedent} and $\beta$ the \emph{consequent}.
The overall conditional language $\L$ is the language of propositional
formulas over atoms in $\L_{\mathrm{cond}}$.
For $\alpha, \beta \in \L$,
$\alpha \rightarrow \beta$ abbreviates
$\lnot \alpha \lor \beta$,
and $\alpha \leftrightarrow \beta$ means
$(\alpha \rightarrow \beta) \land (\beta \rightarrow \alpha)$.


\totalpartial{To give the semantics of
$\L$, we first define a satisfaction relation between solutions
$v : \chi \to \Sigma$ and formulas of $\L_{\mathrm{prop}}$. For $X \in \chi$,
write $v \models X$ iff $v(X) = 1$. For arbitrary
$\varphi \in \L_{\mathrm{prop}}$, $v \models \varphi$
is defined familiarly by recursion.
Now we may define satisfaction for $\L_{\mathrm{cond}}$. Let $M$ be a model in either $\M$ or $\T$.
Write $M \models [\alpha]\beta$ if
$\alpha(M)$ has
solution $v$ and $v \models \beta$. 
Finally, satisfaction for the entire language $\L$ is defined by recursion.}{We give the semantics of $\L$ as follows. Letting $M$ be either an SEM or a monotone simulation, write $M \models [ \alpha ] X$ if $\alpha(M) \models X \leftarrow 1$ and $M \models [ \alpha ] \lnot X$ if $\alpha(M) \models X \leftarrow 0$}

We also consider an augmented language $\L^+$, defined by extending $\L$ to include additional atoms $X \rightsquigarrow Y$ for $X, Y \in \chi$. This is interpreted as causal influence: $X \rightsquigarrow Y$ holds in model $M$ if $M \models X \rightsquigarrow Y$ (recall Defn. \ref{seminfluence}).

Thm. \ref{equivalencetheorem}
implies that any $\varphi \in \L^+$
is satisfiable in $\T$ iff satisfiable in $\M$.
Hence any semantic property
is oblivious to the class in which to interpret it
and we omit this.

A first
observation is that our language and interpretations do not enjoy the property of \emph{compactness} (the property that every unsatisfiable set of formulas has a finite unsatisfiable subset),
and our axiomatizations are hence only complete relative
to finite sets of assumptions:
\begin{proposition}
The languages $\L, \L^+$ are not compact.
\end{proposition}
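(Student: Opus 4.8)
The plan is to establish non-compactness the direct way: exhibit an infinite set $\Phi \subseteq \L$ that is unsatisfiable even though every finite subset of it is satisfiable. Since $\L \subseteq \L^+$, it is cleanest to carry the whole argument out inside the smaller language $\L$, as the very same $\Phi$ then witnesses non-compactness of $\L^+$ too. The one structural feature I intend to exploit is that the time map $t : \chi \to \mathbb{N}$ takes values in a \emph{well-ordered} set, together with the fact recorded above that $M \models X \rightsquigarrow Y$ forces $t(X) < t(Y)$. An infinite strictly $\rightsquigarrow$-descending chain is therefore globally impossible, whereas any \emph{finite} descending chain is perfectly realizable --- precisely the asymmetry that compactness would rule out.

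Concretely, I would fix distinct variables $X_0, X_1, X_2, \dots \in \chi$ and set
\[
  \varphi_n \;=\; [X_{n+1}]\,X_n \;\land\; [\lnot X_{n+1}]\,\lnot X_n,
\]
which lies in $\L$ because each bracketed antecedent is a single literal in $\L_{\mathrm{int}}$. Put $\Phi = \{\varphi_n : n \in \mathbb{N}\}$. To see $\Phi$ is unsatisfiable, suppose $M \models \varphi_n$: instantiating Defn.~\ref{seminfluence} with empty context, $X = X_{n+1}$, $Y = X_n$, and values $x_1 = 1, x_2 = 0$ shows that the two interventional solutions disagree at $X_n$, so $M \models X_{n+1} \rightsquigarrow X_n$ and hence $t(X_{n+1}) < t(X_n)$. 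A single model of all of $\Phi$ would thus yield $t(X_0) > t(X_1) > t(X_2) > \cdots$, an infinite strictly decreasing sequence in $\mathbb{N}$, which is impossible.

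It then remains to satisfy each finite $S \subseteq \Phi$; it suffices to satisfy $\{\varphi_0, \dots, \varphi_N\}$ where $N$ is the largest index occurring in $S$. Here I would build an explicit computable SEM $M \in \M$ --- a ``copying chain'' --- by setting $t(X_n) = N + 1 - n$ for $n \le N+1$ (so these times are strictly decreasing and nonnegative), letting $f_{X_n}(v) = v(X_{n+1})$ for $n \le N$, and sending $X_{N+1}$ together with every remaining variable to the constant $0$. Each $f_{X_n}$ then depends only on the strictly earlier variable $X_{n+1}$, so $M$ is a legitimate SEM, and $t$ and the equation oracle are plainly computable, placing $M$ in $\M$. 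Computing the interventional solutions directly, $[X_{n+1}]$ propagates a $1$ down the chain to $X_n$ while $[\lnot X_{n+1}]$ propagates a $0$, whence $M \models \varphi_n$ for every $n \le N$.

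The point requiring the most care is not any single calculation but the interplay of the two directions: the formulas $\varphi_n$ must be strong enough that satisfying them pins down \emph{genuine} causal influence (so that well-foundedness of $t$ can be invoked), yet weak enough that each finite family is realized by an honest member of the model class. The stated implication $X \rightsquigarrow Y \Rightarrow t(X) < t(Y)$ secures the former and the copying-chain SEM secures the latter; the only real verification burden is confirming that this SEM truly lands in $\M$ and that its solutions under the single-variable interventions behave exactly as described.
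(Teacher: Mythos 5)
Your proposal is correct and takes essentially the same route as the paper: the identical witnessing set $\{[X_{n+1}]X_n \land [\lnot X_{n+1}]\lnot X_n : n \in \mathbb{N}\}$, unsatisfiability via each conjunct forcing $X_{n+1} \rightsquigarrow X_n$ and hence an impossible infinite descending chain of times in $\mathbb{N}$, and finite satisfiability by a simple model. The only difference is that you spell out the copying-chain SEM explicitly where the paper dismisses finite satisfiability as ``easily seen,'' which is a welcome but inessential elaboration.
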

\begin{proof}
Enumerate the variables as $\chi = \{ X_i \}_{i \in \mathbb{N}}$. Consider the set $\Omega = \{ [ X_{i+1} ] X_{i} \land [ \lnot X_{i+1} ] \lnot X_{i} : i \in \mathbb{N} \}$. Every finite subset of $\Omega$ is easily seen to be satisfiable. But each formula implies that $X_{i+1} \rightsquigarrow X_i$, contradicting that $t$ is well-founded; thus $\Omega$ is unsatisfiable.
\end{proof}

\subsection{Axiomatizations} \label{axn}

We now give an
axiomatization $\textsf{AX}$ of the validities of $\L$. 
Start with the base system $\textsf{AX}^{\downarrow}_{\mathrm{det}}$ from \cite{IbelingIcard},
obtained by the axioms and rules:
\begin{eqnarray*}
 \textsf{PC}. && \mbox{Propositional calculus} \\
 \textsf{RW}.& & \mbox{From }\beta \rightarrow \beta'\mbox{ infer }[\alpha]\beta \rightarrow [\alpha]\beta' \\
  \textsf{R}. && [\alpha]\alpha \\
 \textsf{K}. && [\alpha](\beta \rightarrow \gamma) \rightarrow ([\alpha]\beta \rightarrow [\alpha]\gamma) \\
 \textsf{F/D}. && [ \alpha ] \lnot \beta \leftrightarrow \lnot [ \alpha ] \beta 
\end{eqnarray*}
The causal axiom of \emph{effectiveness} \citep{GallesPearl,Halpern2000} is encoded by \textsf{RW} and \textsf{R}, while \textsf{F/D} expresses the fact that solutions always exist uniquely. System $\textsf{AX}^{\downarrow}_{\mathrm{det}}$, without \textsf{F/D}---a system significantly weaker than any existing logic of causal conditionals in the literature---has been shown sound and complete with respect to the general class of simulation models, while \textsf{F/D} can be added to axiomatize those programs that are deterministic and/or always halt \citep{IbelingIcard}. In the present setting we are assuming determinism (but allowing a natural incorporation of probability; recall \S\ref{disc}).
 
In order to capture the logic of our restricted classes of SEMs and simulation programs, we add two further axioms to obtain the system $\textsf{AX} = \textsf{AX}^{\downarrow}_{\mathrm{det}} + \textsf{C} + \textsf{Rec}$. 
Axiom $\textsf{C}$ is known variously as
\emph{cautious monotonicity} \citep{Kraus1990} or \emph{composition} \citep{GallesPearl,Pearl2009}, and schema $\textsf{Rec}$
is known as \emph{recursiveness} \citep{Halpern1998,Halpern2000}.
\begin{eqnarray*}
  \textsf{C}. && \left([\alpha]\beta \wedge [\alpha]\gamma\right) \rightarrow [\alpha \wedge \beta]\gamma \\
  \textsf{Rec}. && \big([ \alpha_1 \land X_1 ] \ell_2 \land [ \alpha_1 \wedge \lnot X_1 ] \lnot \ell_2 \land \dots \land \\
  & \quad &
   [ \alpha_{k-1} \land X_{k-1} ] \ell_k \land [ \alpha_{k-1} \wedge \lnot X_{k-1} ] \lnot \ell_k\big) \\
  & \quad & \rightarrow
  \lnot\left(
  [ \alpha_{k} \land X_{k} ] \ell_1 \land [ \alpha_{k} \wedge \lnot X_{k} ] \lnot \ell_1
  \right)
\end{eqnarray*}
In $\textsf{Rec}$, each $\ell_i$ is an $X_i$-literal and $X_1 \neq X_k$. Note that the axiom commonly known as \emph{reversibility} is easily derivable in \textsf{AX} \citep{GallesPearl}.

For $\L^+$, consider the additional axioms
\begin{eqnarray*}
  \textsf{Wit}. \hspace{-.2in} &&
  \left([ \alpha \land X_1 ] \ell_2 \land [ \alpha \wedge \lnot X_1 ] \lnot \ell_2\right)
   \rightarrow X_1 \rightsquigarrow X_2 \\
  \textsf{Rec}^+. \hspace{-.2in} && \left(X_1 \rightsquigarrow X_2 \land \dots \land
   X_{k-1} \rightsquigarrow X_k\right)
 \rightarrow X_k \nrightsquigarrow X_1 \\
  \textsf{Trans}. \hspace{-.2in}  && \left(X \rightsquigarrow Y \land Y \rightsquigarrow Z\right) \rightarrow X \rightsquigarrow Z
\end{eqnarray*}
where in $\textsf{Wit}$, $\ell_2$ is an $X_2$-literal and $X_1 \neq X_2$. Let $\textsf{AX}^+ = \textsf{AX} + \textsf{Wit} + \textsf{Rec}^+$.

We are now in a position to present our three axiomatization results. 

\begin{theorem}
\label{complete:ax}
  $\textsf{AX}$ is sound and complete with respect to the validities of $\L$.
\end{theorem}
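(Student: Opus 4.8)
The plan is to establish soundness and completeness separately, treating satisfiability in $\M$ throughout and transferring to $\T$ at the end via Thm.~\ref{equivalencetheorem}. Soundness is the routine direction: I would verify that each axiom and rule preserves truth in every $M\in\M$. The base system $\textsf{AX}^\downarrow_{\mathrm{det}}$ is already sound for simulations, and its pieces are easy to re-check for computable SEMs, since each $M\in\M$ has a unique solution under any intervention: \textsf{RW} and \textsf{K} follow from uniqueness, \textsf{R} is effectiveness (the constant function installed by $\alpha(M)$ forces the solution to agree with $\alpha$ on $\mathrm{dom}(\alpha)$), and \textsf{F/D} records exactly that this solution exists and is unique. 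For \textsf{C}, if $\alpha(M)\models v$ with $v\models\beta$, then additionally fixing the variables of $\beta$ to the values they already take in $v$ leaves the solution unchanged, which is immediate from the order-respecting iterative construction of solutions (Defn.~\ref{semdefinition}). For \textsf{Rec}, I would use the fact noted after Defn.~\ref{seminfluence} that $M\models X\rightsquigarrow Y$ forces $t(X)<t(Y)$: each conjunct pair $[\alpha_i\wedge X_i]\ell_{i+1}\wedge[\alpha_i\wedge\lnot X_i]\lnot\ell_{i+1}$ witnesses $X_i\rightsquigarrow X_{i+1}$, so closing the cycle back to $X_1$ would contradict well-foundedness of $t$.

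For completeness I would show every $\textsf{AX}$-consistent $\varphi\in\L$ is satisfiable, which (as $\L$ is not compact) is the only available form of the result. Since $\varphi$ mentions only a finite set $V=\{X_1,\dots,X_n\}\subseteq\chi$, all antecedents of its conditionals are interventions on $V$ and all consequents are propositional over $V$. I would extend $\varphi$ to a maximal $\textsf{AX}$-consistent set $\Gamma$ within the finite fragment of $\L$ generated by conditionals $[\alpha]\beta$ with $\alpha,\beta$ over $V$. Using \textsf{F/D} and \textsf{K} together with \textsf{PC}, for every intervention $\alpha$ on $V$ and every $X\in V$ exactly one of $[\alpha]X,[\alpha]\lnot X$ lies in $\Gamma$, so $\Gamma$ determines a total response function $\alpha\mapsto v_\alpha$, where $v_\alpha:V\to\{0,1\}$ with $v_\alpha(X)=1$ iff $[\alpha]X\in\Gamma$.

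The heart of the argument is to realize $\{v_\alpha\}$ by a finite recursive SEM on $V$. Here I would appeal to, and as needed re-derive, the finite-variable characterization of \citet{GallesPearl} and \citet{Halpern2000}: effectiveness (\textsf{R}), composition (\textsf{C}), and recursiveness (\textsf{Rec})---together with reversibility, which is derivable in \textsf{AX}---are exactly the conditions under which a response function arises from a recursive SEM. Concretely, \textsf{Rec} guarantees that the relation ``$X$ influences $Y$ in some context,'' read off from $\Gamma$ via the witnessing conjunctions appearing in the antecedent of \textsf{Rec}, is acyclic on the finite $V$; I would take any linear extension as a time map $t_0$ on $V$. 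Reading off, for each $X\in V$, the value $v_\alpha(X)$ as a function of the settings of the $V$-variables preceding $X$ under $t_0$ yields structural functions $\{f_X\}_{X\in V}$, and \textsf{C} is precisely what makes this read-off well-defined and path-independent, so that the resulting finite SEM $M_0$ satisfies $\alpha(M_0)\models v_\alpha$ for all $\alpha$, whence $M_0\models[\alpha]\beta$ iff $[\alpha]\beta\in\Gamma$ for all conditionals over $V$; in particular $M_0\models\varphi$.

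Finally I would embed $M_0$ into a genuine member of $\M$: extend $t_0$ to a computable $t:\chi\to\mathbb{N}$ placing each $X\notin V$ at a computable time beyond those of $V$, set $f_X$ to the constant $0$ for every $X\notin V$, and keep the finite functions $f_X$ for $X\in V$. The induced $\phi(X,v)$ is then computable---a finite case analysis for $X\in V$ and a constant otherwise---and $t$ is computable, so $M\in\M$ by Defn.~\ref{computablesem}. Since the added variables are inert and $\varphi$ mentions only $V$, the restriction to $V$ of the solution of $\alpha(M)$ equals $v_\alpha$ for every intervention $\alpha$ occurring in $\varphi$, whence $M\models\varphi$; by Thm.~\ref{equivalencetheorem} the same formula is satisfiable in $\T$. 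I expect the realizability step to be the main obstacle: the delicate points are extracting an acyclic causal order purely from the syntactic influence-witnesses controlled by \textsf{Rec}, and checking that \textsf{C} forces the equation read-off to be consistent regardless of which chain of interventions reaches a given value---so that the finite completeness argument of \citet{GallesPearl,Halpern2000} adapts cleanly to the present conditional formulation, reversibility included.
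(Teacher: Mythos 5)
Your proposal is correct in outline, and its construction coincides with the paper's own: both extract a strict order on the finitely many variables $\chi_\varphi$ from the \textsf{Rec}-witnessing conjunctions in a maximal consistent set $\Gamma$, both define $f_X$ by reading off from $\Gamma$ the value $v_{\alpha'}(X)$ where $\alpha'$ sets exactly the variables preceding $X$, and both pad with inert constant equations to obtain a member of $\M$ (transferring to $\T$ via Thm.~\ref{equivalencetheorem}). The one real difference is how the crucial verification---that the solution of $\alpha(M)$ agrees with $v_\alpha$, so that $M$ satisfies all of $\Gamma$---is discharged. The paper proves it self-containedly by induction on $t(X)$: the inductive hypothesis plus \textsf{K} yields $[\alpha]\alpha' \in \Gamma$ where $\alpha'$ records the predecessors' solution values; then, were the solution to disagree with $v_\alpha$ at $X$, \textsf{C} would give $[\alpha\land\alpha']X \in \Gamma$, and \textsf{C} together with \textsf{Rec} would force $[\alpha']X \in \Gamma$, contradicting the definition of $f_X$. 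You instead invoke the finite-signature completeness of Galles--Pearl and Halpern as a black box. That route does work---on the finite fragment $\L_\varphi$, \textsf{AX} derives Halpern's system for recursive models, since his recursiveness axiom (in which $\rightsquigarrow$ is expressed as a finite disjunction over contexts) follows from instances of \textsf{Rec} by propositional reasoning---and it buys brevity at the cost of a nontrivial translation check, which is presumably why the paper chose to be self-contained. One caveat: your claim that \textsf{C} is ``precisely what makes this read-off well-defined and path-independent'' is not accurate. \textsf{C} alone cannot rule out that intervening on variables at or after $t(X)$ alters $X$'s value even when all of $X$'s predecessors are pinned; that is exactly where \textsf{Rec} enters the verification (both in the paper's induction step and inside Halpern's finite proof), not merely in securing acyclicity of the order.
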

\begin{proof}
\totalpartial{
By Thm. \ref{equivalencetheorem} it suffices to carry out the proof for $\M$ only. Soundness is straightforward. As for completeness, we show that any $\varphi \in \L$ consistent with $\textsf{AX}$ is satisfiable. Let $\chi_{\varphi} \subset \chi$ be the set of variables that appear as atoms in $\varphi$, let $\L_{\varphi} \subset \L$ be the fragment of $\L$ using only atoms from $\chi_{\varphi}$, and extend $\varphi$ to a maximal consistent set $\Gamma \subset \L_{\varphi}$. We construct an SEM $M$ satisfying all of $\Gamma$, and, in particular, $\varphi$.


Let us first give $t$.
Define an irreflexive relation $\prec$ on $\chi_{\varphi}$ by $X_1 \prec X_2$ if any instance of the schema $[\alpha_1 \land X_1 ] \ell_2 \land [ \alpha_1 \land \lnot X_1 ] \lnot \ell_2$ is in $\Gamma$,
where $\ell_2$ is an $X_2$-literal. By consistency with $\textsf{Rec}$ there is a total order consistent with
$\prec$. 
Assign the least element to $t = 0$, and iteratively remove least elements to obtain a $t$ injective on $\chi_{\varphi}$.

Now we may give the structural equations. Note that for every $X \in \chi_{\varphi}$ and $\alpha \in \L_{\mathrm{int}} \cap \L_{\varphi}$ there is exactly one formula of the form $[ \alpha ] \ell$, where $\ell$ is an $X$-literal, in $\Gamma$: the forward direction of $\textsf{F/D}$ shows there is at most one, and the backward direction shows there is one. Thus form a vector $v_{\alpha} : \chi \to \{0, 1\}$ by $v_{\alpha}(X) = 0, 1$ if $[ \alpha ] \lnot X, [ \alpha ] X$ are in $\Gamma$ respectively, and $v_{\alpha}(X) = 0$ if $X \notin \chi_{\varphi}$. Suppose $\alpha$ includes exactly the variables $X' \in \chi_{\varphi}$ for which $t(X') < t(X)$. Let $\alpha, \dots$ represent an extension of $\alpha$ to all of $\chi$; then define $f_X$ on any such extension as $f_X(\alpha, \dots) = v_{\alpha}(X)$. For all $X \notin \chi_{\varphi}$, let $f_X(\cdot) = 0$ and $t(X) = 0$.

It remains to prove that $M$ satisfies every formula of $\Gamma$, for which it suffices to show that $M \models [ \alpha ] X$ iff $v_{\alpha}(X) = 1$. To this effect we prove that $v(X) = v_{\alpha}(X)$, where $\alpha(M) \models v$, by induction on the time $t(X)$.
If $X \in \alpha$ then by $\textsf{RW}, \textsf{R}$ we have $v(X) = v_{\alpha}(X) = \alpha(X)$; thus suppose $X \notin \alpha$.
Consider an $\alpha'$ that includes the value $v(X') = v_{\alpha}(X')$, where the equality holds inductively, for each $X' \in \chi_{\varphi}$ such that $t(X') < t(X)$. Thus by $\textsf{K}$ we have that $[ \alpha ] \alpha' \in \Gamma$.
Suppose that $v, v_{\alpha}$ differ at $X$, e.g. $v(X) = 0$ but $v_{\alpha}(X) = 1$. Then
$[ \alpha ] X \in \Gamma$ so by $\textsf{C}$,
$[ \alpha \land \alpha' ] X \in \Gamma$. By $\textsf{C}$ and $\textsf{Rec}$ we must have $[ \alpha' ] X \in \Gamma$ also. But this term specifies $f_X$, and hence this contradicts that $v(X) = 0$.}{}
\end{proof}



\begin{theorem}
  \label{complete:ax+}
$\textsf{AX}^+$ is sound and complete with respect to the validities of $\L^+$.
\end{theorem}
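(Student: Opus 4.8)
The plan is to build on Theorem~\ref{complete:ax}, treating $\textsf{Wit}$ and $\textsf{Rec}^+$ as the only new ingredients. For \textbf{soundness} it suffices to verify these two axioms. $\textsf{Wit}$ is immediate from Defn.~\ref{seminfluence}: if $M \models [\alpha \wedge X_1]\ell_2 \wedge [\alpha \wedge \lnot X_1]\lnot \ell_2$, then the context $i = \alpha$ together with $x_1 = 0$, $x_2 = 1$ witnesses $M \models X_1 \rightsquigarrow X_2$. For $\textsf{Rec}^+$, recall (stated just after Defn.~\ref{seminfluence}) that $M \models X \rightsquigarrow Y$ forces $t(X) < t(Y)$; hence a chain $X_1 \rightsquigarrow \dots \rightsquigarrow X_k$ gives $t(X_1) < t(X_k)$, so $X_k \rightsquigarrow X_1$ is impossible.

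For \textbf{completeness} I would, by Thm.~\ref{equivalencetheorem}, work in $\M$. Given $\textsf{AX}^+$-consistent $\varphi \in \L^+$, let $\chi_\varphi$ collect the finitely many variables occurring in $\varphi$ (now including those in its $\rightsquigarrow$-atoms) and extend $\varphi$ to a maximal consistent $\Gamma \subseteq \L^+_\varphi$. Running the construction of Thm.~\ref{complete:ax} on the conditional part of $\Gamma$ yields a base SEM $M_0$ satisfying every conditional of $\Gamma$. The key observation is that the influence relation of $M_0$ restricted to $\chi_\varphi$ is \emph{exactly} the set of \emph{witnessed} pairs $(X,Y)$, i.e.\ those for which $\Gamma$ contains some $[\alpha \wedge X]\ell \wedge [\alpha \wedge \lnot X]\lnot\ell$ with $\ell$ a $Y$-literal: any witnessing intervention in $M_0$ is expressible over the finite set $\chi_\varphi$, and conversely any such formula is realized by $M_0$. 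By $\textsf{Wit}$ every witnessed pair is a positive $\rightsquigarrow$-atom of $\Gamma$, so contrapositively every pair with $\lnot(X \rightsquigarrow Y) \in \Gamma$ is non-witnessed and hence not an influence of $M_0$. Thus $M_0$ already validates all conditionals and all \emph{negative} influence atoms.

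What remains is to realize the positive atoms $X \rightsquigarrow Y \in \Gamma$ that are \emph{not} witnessed, without disturbing the conditionals and without creating spurious influence among $\chi_\varphi$. By $\textsf{Rec}^+$ the relation $\rightsquigarrow_\Gamma$ is acyclic, so I fix a computable time map $t$ refining it (and refining the order $\prec$ of Thm.~\ref{complete:ax}). I then introduce a fresh block of variables $P$ --- available precisely because the setting is open-universe --- encoding a ``phase'' $\pi \in \mathbb{N}$ with a default sentinel value $\pi_0$ distinct from every level $t(Y)$, and redefine each structural function so that $f_Y$ equals its $M_0$-value when $\pi = \pi_0$, equals the parity $\bigoplus_X X$ of its unwitnessed predecessors $X$ when $\pi = t(Y)$, and is constant otherwise. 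Since $P$ is untouched by any $\chi_\varphi$-intervention, its value defaults to $\pi_0$ there, so the new model agrees with $M_0$ on all conditionals and on all phase-$\pi_0$ influences (which are $\subseteq \rightsquigarrow_\Gamma$). Setting $\pi = t(Y)$ and neutralizing the remaining unwitnessed predecessors of $Y$ activates exactly the intended edge, realizing $X \rightsquigarrow Y$; and because any single phase value makes live only the edges into targets of one fixed level, while every unwitnessed edge strictly increases $t$, at most one edge of any directed path is ever live at once. Hence no composite influence appears, the influence relation of the final (computable) model restricted to $\chi_\varphi$ is exactly $\rightsquigarrow_\Gamma$, and together with the conditionals this gives $M \models \Gamma$, so $M \models \varphi$.

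The \textbf{main obstacle} is this last step: matching the model's influence relation to $\Gamma$ on the nose. Realizing a positive atom forces $Y$ genuinely to depend on $X$ in some context, which naively leaks influence to $Y$'s successors and so would violate exactly the negative atoms that non-transitivity of $\rightsquigarrow$ permits. The phase-gating isolates each edge, and it works only because fresh variables can separate the unboundedly many time levels --- the feature a fixed, finite variable set would lack. I expect the laborious part to be verifying that this gadget introduces no unintended solutions or influences and interacts correctly with the witnessed influences already present in $M_0$.
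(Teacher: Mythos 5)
Your proposal is correct and follows the same high-level strategy as the paper's proof: soundness reduces to checking $\textsf{Wit}$ and $\textsf{Rec}^+$; completeness reuses the model of Thm.~\ref{complete:ax}, built over a total order refining both $\prec$ and the $\rightsquigarrow$-atoms of $\Gamma$ (possible exactly because $\textsf{Wit}$ gives $\prec\,\subseteq\,\prec^+$ and $\textsf{Rec}^+$ gives acyclicity), and then exploits the open universe by adding fresh variables that switch on the missing influences without disturbing the conditionals. The difference lies in the gadget. The paper introduces one gate $Z_{X,Y}$ (time $0$, default $0$) for every pair and modifies $f_Y$ to copy $v(X_i)$ exactly when $Z_{X_i,Y}=1$ and \emph{every other gate in the entire model} is $0$, collapsing to the constant $0$ otherwise; this global mutual exclusion is what blocks spurious and composite influences, and the negative atoms $X \nrightsquigarrow Y$ are then checked by a case analysis on which gates a supposed witnessing intervention could set. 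Your phase-selector gadget achieves the same isolation structurally rather than by exclusion: under a fixed phase $\tau$ only edges into level-$\tau$ targets are live, and since every gated edge strictly increases $t$, no two live edges compose; the parity aggregation even makes your "neutralizing" step unnecessary, since flipping one input of a parity always flips the output. You also make explicit a fact the paper leaves implicit in its case analysis, namely that the influence relation of the base model on $\chi_\varphi$ is exactly the set of $\Gamma$-witnessed pairs (this holds because the base model's structural functions read only $\chi_\varphi$-variables, so any witnessing context restricts to a finite $\chi_\varphi$-intervention, and maximality of $\Gamma$ then puts the witnessing conditionals in $\Gamma$), and you accordingly gate only the unwitnessed positive atoms where the paper gates all of them uniformly. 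Two small details to tidy: the phase block should be finite (only the finitely many levels $t(Y)$ for $Y \in \chi_\varphi$ plus the sentinel are needed), so that decoding is computable and the all-zero default can encode $\pi_0$; and the phase variables need times strictly below every modified $t(Y)$, which holds since any gated $Y$ has a $\prec^+$-predecessor and hence $t(Y) \ge 1$ (or shift $t$ by one). Neither affects correctness.
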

\begin{proof}
Soundness is again straightforward; we show a consistent $\varphi \in \L^+$ is satisfiable. Extend to a maximal consistent set $\Gamma$ using only $\chi_{\varphi}$-atoms. Define a relation $\prec^+$ by $X \prec^+ Y$ if $X \rightsquigarrow Y \in \Gamma$. By consistency with $\textsf{Rec}^+$ we have that there is a total order consistent with $\prec^+$. Now, reproduce the construction from the proof of Thm. \ref{complete:ax} on $\Gamma$. By consistency with $\textsf{Wit}$, $\prec$ is consistent with $\prec^+$, so we may extend them to the same total order, from which we obtain $t$. Let $M$ be the model thus constructed.

Now, obtain a new model $M'$ by extending $M$ as follows. For every pair $X, Y$, introduce a new variable $Z_{X,Y}$ with $f_{Z_{X,Y}}(\cdot) = 0$ and $t(Z_{X,Y}) = 0$. It is possible to add such variables since $\chi$ is infinite. Now, modify the structural functions for each $Y \in \chi_{\varphi}$ as follows. Labelling the variables $X'$ for which $X' \prec^+ Y$ as $X_1, \dots, X_k$:
\[
  f_Y(v) = \begin{cases}
  f_Y(v), & v(Z_{X', Y'}) = 0 \text{ for all } X', Y' \\
  X_1, &  v(Z_{X_1, Y}) = 1 \text{ and } v(Z_{X', Y'}) = 0\\ & \text{whenever } (X', Y') \neq (X_1, Y)\\
  & \dots \\
  X_k, &  v(Z_{X_k, Y}) = 1 \text{ and } v(Z_{X', Y'}) = 0\\ & \text{whenever } (X', Y') \neq (X_k, Y)\\
  0, & \text{otherwise} 
  \end{cases}
\]
This modification is admissible because $\prec^+$ is consistent with $t$. We claim that $M'$ satisfies all of $\Gamma$. Any $\psi \in \Gamma \cap \L$ is clearly satisfied: $\psi$ is satisfied by $M$ and in $M'$ we have $v(Z_{X, Y}) = 0$ for all $X, Y$, so the structural equations are unmodified from those of $M$. Thus, suppose that an extended atom $X \rightsquigarrow Y \in \Gamma$. Consider an intervention $\alpha$ that holds $Z_{X, Y}$ to $1$, and holds all other $Z_{X', Y'}$ to $0$. Then under $\alpha$, we have the modified structural equation $f_Y(v) = v(Y) = v(X)$, so $M' \models X \rightsquigarrow Y$.

Now suppose $X \nrightsquigarrow Y \in \Gamma$. Suppose toward a contradiction that $M' \models X \rightsquigarrow Y$. Then there is an intervention $\alpha$ under which toggling $X$ changes $Y$. If $\alpha$ does not set any $Z_{X', Y'}$-variables, then the structural equations are the same in $M, M'$ and we have that $M \models [ \alpha \land X ] \ell_Y \land [ \alpha \land \lnot X ] \lnot \ell_Y$, where $\ell_Y$ is a $Y$-literal, contradicting consistency with $\textsf{Wit}$. If $\alpha$ sets any $Z_{X', Y'}$ to $1$ for $Y' \neq Y$, then the value of $Y$ is fixed to $0$. The only case remaining is when only $\alpha(Z_{X', Y}) = 1$ for some $X'$. If $X' = X$ then there is a contradiction since the modification was only made if $X \rightsquigarrow Y \in \Gamma$. Thus $X' \neq X$. The structural equation for $Y$ then becomes $f_Y(v) = v(Y) = v(X')$; further, $X'$ is fixed to $0$ since $v(Z_{X', Y}) = 1$, so this is impossible.
\end{proof}


\begin{theorem}
  \label{complete:ax+trans}
$\textsf{AX}^+ + \textsf{Trans}$ is sound and complete with respect to the validities of $\L^+$ over $\Mlocal$.
\end{theorem}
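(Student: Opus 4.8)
The plan is to adapt the completeness proof of Thm.~\ref{complete:ax+}, the one new ingredient being that the constructed model must now land in $\Mlocal$. Soundness is immediate: every axiom of $\textsf{AX}^+$ is already sound over $\M \supseteq \Mlocal$, and $\textsf{Trans}$ is sound because $\rightsquigarrow$ is transitive in $\Mlocal$ (as recorded just before Prop.~\ref{Mlocalmediated}). For completeness, I would take a $\varphi$ consistent with $\textsf{AX}^+ + \textsf{Trans}$, extend it to a maximal consistent $\Gamma$ over the $\chi_{\varphi}$-atoms, and set $X \prec^+ Y$ iff $X \rightsquigarrow Y \in \Gamma$. The decisive structural gain from the new axiom is that $\textsf{Rec}^+$ together with $\textsf{Trans}$ now force $\prec^+$ to be a \emph{strict partial order}. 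Running the construction of Thm.~\ref{complete:ax+} verbatim produces a witness-augmented model $M' \in \M$ satisfying $\Gamma$, but with equations that may read arbitrarily far back in time.

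Next I would \emph{localize} $M'$ into some $M'' \in \Mlocal$ realizing the same $\L^+$-theory over $\chi_{\varphi}$, exploiting the infinitude of $\chi$ to supply fresh variables. The idea is to insert relay chains: wherever an equation of $M'$ --- a base equation $f_{Y_j}$ or a $Z$-gadget equation setting $f_{Y_j}(v)=v(X_i)$ --- reads a variable $W$ from an earlier slice, I replace that reference by a chain of fresh copy variables $C_W^{s}=C_W^{s-1}$ filling every intervening slice and ending just below $t(Y_j)$. Every equation of $M''$ then reads only the immediately preceding slice, so $M'' \in \Mlocal$. The crucial bookkeeping fact is that every routed reference runs along a $\prec^+$-edge: the gadget routes $X_i$ into $Y_j$ only when $X_i \prec^+ Y_j$, and any genuine dependence of a base equation $f_{Y_j}$ on $Y_i$ already forces $Y_i \prec^+ Y_j$ by $\textsf{Wit}$, since $M'$ agrees with $\Gamma$ on all of $\L_{\varphi}$.

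It then remains to verify that $M''$ satisfies $\Gamma$. The conditional fragment $\Gamma \cap \L$ is preserved because any $\alpha \in \L_{\mathrm{int}}$ fixes only $\chi_{\varphi}$-variables, leaving the relay chains intact to transmit values faithfully, so each $Y_j$ takes the same value as in $M'$. For the influence atoms, the inclusion $\prec^+ \subseteq {\rightsquigarrow}$ is easy: given $X \prec^+ Y$, the localized $Z_{X,Y}$-context with all chains left uncut carries a toggle of $X$ along its relay into $Y$, witnessing $M'' \models X \rightsquigarrow Y$. The hard part --- and exactly the step where $\textsf{Trans}$ earns its keep --- is the reverse inclusion, that $X \not\prec^+ Y$ implies $M'' \not\models X \rightsquigarrow Y$. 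Here I would trace forward every way a toggle of $X$ can propagate: each elementary hop from one $\chi_{\varphi}$-variable into the next travels along a routed $\prec^+$-edge, so any propagation path reaching $Y$ projects to a $\prec^+$-increasing sequence $X \prec^+ \cdots \prec^+ Y$, and transitivity of $\prec^+$ collapses it to $X \prec^+ Y$. Contrapositively, no intervention can let $X$ reach $Y$ when $X \not\prec^+ Y$. This is precisely the obstruction that need not hold for non-local models: in $\Mlocal$, influence is forced to be the transitive closure of adjacent-slice influences (cf. the interpolation of Prop.~\ref{Mlocalmediated}), so only a transitive $\prec^+$ is realizable, which is guaranteed exactly by consistency with $\textsf{Trans}$. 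With both inclusions in hand, $M'' \in \Mlocal$ satisfies $\Gamma$, hence $\varphi$.
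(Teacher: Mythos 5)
Your proposal is correct and follows essentially the same route as the paper's proof: gadget variables $Z_{X,Y}$ realizing exactly the $\prec^+$-edges, relay chains of copy variables to land in $\Mlocal$, $\textsf{Wit}$ to project genuine base-equation dependencies onto $\prec^+$, and $\textsf{Trans}$ to collapse the resulting chains into a single edge. The only (immaterial) difference is ordering---you run the gadget construction of Thm.~\ref{complete:ax+} first and localize second, whereas the paper first localizes the base model ($M'$) and then installs already-local gadgets ($M''$)---and your path-tracing argument for the reverse inclusion is the paper's recursion over which $Z_{X_i,Y}$ the witnessing intervention activates, stated in different words.
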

\begin{proof}
Soundness follows since causal influence in $\Mlocal$ is transitive, as we remarked before. To show completeness, as before, extend a given consistent $\varphi \in \L^+$ to a maximal consistent set $\Gamma$, and define $\prec^+$ by $X \prec^+ Y$ if $X \rightsquigarrow Y \in \Gamma$. By consistency with $\textsf{Rec}^+, \textsf{Trans}$, $\prec^+$ is a strict partial order. Again reproduce the construction from the proof of Thm. \ref{complete:ax}; by consistency with $\textsf{Wit}$, $\prec$ is consistent with $\prec^+$ and we extend them to the same total order, obtaining $t$ and $M$.

First convert $M$ to a model $M' \in \Mlocal$. In $M$, the structural function for a variable $Y$ may be written as $f_Y( X_1, \dots, X_k )$, where $X_1, \dots, X_k \in \chi_{\varphi}$ are precisely the variables preceding $Y$. To form $M'$, for each $X_i$ that does not satisfy $t(X_i) = t(Y) - 1$, add a chain of variables $W'_{X_i, t(X_i) + 1}, \dots, W'_{X_i, t(Y)-1}$ such that $t(W'_{X_i, t}) = t$ and with structural equations $v(W'_{X_i, t(X_i) + 1}) = v(X_i), \quad v(W'_{X_i, t}) = v(W'_{X_i, t-1})$ for all $t(X_i) + 1 < t < t(Y)$, and change the structural function for $Y$ to $f_Y(W'_{X_i, t(Y)-1}, \dots, W'_{X_i, t(Y)-1} , \dots, W'_{X_k, t(Y)-1})$. It is clear that $M' \models \psi$ iff $M \models \psi$ for any $\psi \in \L^+$.

Now modify $M'$ to obtain a model $M'' \in \Mlocal$ that we will show to satisfy all of $\Gamma$. To form $M''$, for every pair $X, Y$ such that $X \prec^+ Y$, add a chain $W_{X, Y, t(X) + 1}, \dots, W_{X, Y, t(Y) - 1}$ and a single variable $Z_{X, Y}$.
These have the times $t(W_{X, Y, t'}) = t'$, and $t(Z_{X, Y}) = t(Y) - 1$; and the structural equations $v(W_{X, Y, t(X) + 1}) = v(X)$, $v(W_{X, Y, t'}) = v(W_{X, Y, t'-1})$ for $t(X) < t' < t(Y)$, and $v(Z_{X, Y}) = 0$.
Modify the structural equation for $Y$ as follows,
labelling the variables $X'$ for which $X' \prec^+ Y$ as $X_1, \dots, X_k$:
\[
  f_Y(v) = \begin{cases}
  f_Y(v), & \hspace{-0.1cm} v(Z_{X', Y}) = 0 \text{ for all } X' \\
  v(W_{X_1, Y, t(Y)-1}), & v(Z_{X_1, Y}) = 1 \text{ and }\\
  & \hspace{-0.5cm} v(Z_{X', Y}) = 0, \text{ if } X' \neq X_1 \\
  & \dots \\
  v(W_{X_k, Y, t(Y)-1}), & v(Z_{X_k, Y}) = 1 \text{ and }\\
  & \hspace{-0.5cm} v(Z_{X', Y}) = 0, \text{ if } X' \neq X_k \\
  0,& \text{otherwise}
  \end{cases}
\]
The modification is admissible because all variables involved are assigned to $t(Y) - 1$. We claim that $M''$ satisfies all of $\Gamma$. Again, this holds for any $\L$-literal, so suppose $X \rightsquigarrow Y \in \Gamma$. Under an intervention that holds the $Z_{X, Y}$ to $1$, we have that $v(Y) = v(W_{X, Y, t(Y)-1}) = \dots = v(W_{X, Y, t(X) + 1}) = v(X)$ so that $M'' \models X \rightsquigarrow Y$.

Now suppose that $M'' \models X \rightsquigarrow Y$. The intervention witnessing this fixes either none, or one of the $Z_{X_i, Y}$ to $1$. If it fixes none, then taking the values of all the $\chi_{\varphi}$ variables into an intervention gives a witness that $M \models X \rightsquigarrow Y$, so by $\textsf{Wit}$, $X \rightsquigarrow Y \in \Gamma$. Thus suppose it fixes just one $Z_{X_i, Y}$ to $1$. This gives that $X_i \rightsquigarrow Y \in \Gamma$, with $t(X) \le t(X_i) < t(Y)$. Now $M'' \models X \rightsquigarrow X_i$; we thus repeat the argument obtaining a sequence $X \prec^+ X'_1 \prec^+ \dots \prec^+ X'_k \prec^+ Y$. Since $\prec^+$ is transitive (from $\textsf{Trans}$) we finally obtain $X \rightsquigarrow Y \in \Gamma$.
\end{proof}


\subsection{Complexity}

We turn to the complexity of deciding satisfiability for an arbitrary formula $\varphi \in \L$.
Let $\mathcal{L}^-_{\mathrm{int}(\varphi)} \subset \mathcal{L}_{\mathrm{int}} \cap \mathcal{L}_{\varphi}$ be the subset of interventions appearing in $\varphi$.
For each $X \in \chi_\varphi$, let $L(X) = \{ X, \lnot X\}$ be the set of $X$-literals.
Then define a subset $\Delta_\varphi \subset \mathcal{L}_{\mathrm{\varphi}}$ by $\Delta_\varphi = \Big\{ \bigwedge_{\substack{\alpha \in \mathcal{L}^-_{\mathrm{int}(\varphi)}}} \big([\alpha] \bigwedge_{X \in \chi_\varphi} \ell_X^\alpha \big) : \ell_X^\alpha \in L(X) \text{ for each } \alpha \in \mathcal{L}^-_{\mathrm{int}(\varphi)}, X \in \chi_\varphi \Big\}$; we have $| \Delta_\varphi| = 2^{|\chi_\varphi| |\mathcal{L}^-_{\mathrm{int}(\varphi)}|}$.
Lem.~\ref{lem:complexity:1} and \ref{lem:complexity:2} below are straightforward; cf. \cite[Lem.~13]{ibelingicard2020} and \cite[App.~A]{BCII2020}.
\begin{lemma}
\label{lem:complexity:1}
 $\models \varphi \leftrightarrow \bigvee_{\substack{\delta \in \Delta_\varphi\\ \models \delta \rightarrow \varphi}} \delta$.
 \qed
\end{lemma}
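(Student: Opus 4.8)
The plan is to read each $\delta \in \Delta_\varphi$ as a complete description of a model's \emph{interventional type} relative to $\varphi$, and to show that $\varphi$ decomposes as the disjunction of exactly those types that force it true. First I would observe that the truth value of $\varphi$ in any model $M$ depends only on the tuple consisting, for each intervention $\alpha \in \mathcal{L}^-_{\mathrm{int}(\varphi)}$, of the restriction to $\chi_\varphi$ of the solution of $\alpha(M)$. This is because $\varphi$ is a Boolean combination of conditional atoms $[\alpha]\beta$ whose antecedents lie in $\mathcal{L}^-_{\mathrm{int}(\varphi)}$ and whose consequents $\beta$ range over atoms in $\chi_\varphi$; once these restricted solutions are fixed, the truth of every such atom, and hence of $\varphi$, is determined.

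Next I would note that the elements of $\Delta_\varphi$ are in bijection with these interventional types: a choice of literal $\ell_X^\alpha$ for each pair $(\alpha, X)$ records exactly the value of $X$ in the solution of $\alpha(M)$. The key structural claim is that \emph{every model satisfies exactly one} $\delta \in \Delta_\varphi$. It satisfies at least one, namely the $\delta$ whose literals match its own solutions: for each $\alpha$ the solution $v_\alpha$ of $\alpha(M)$ realizes $\bigwedge_{X \in \chi_\varphi} \ell_X^\alpha$, so $M \models [\alpha]\bigwedge_{X} \ell_X^\alpha$, and conjoining over all $\alpha$ gives $M \models \delta$. It satisfies at most one because solutions are unique and total (reflected syntactically by $\textsf{F/D}$): two distinct $\delta, \delta'$ disagree on some literal $\ell_X^\alpha$, which would force $v_\alpha(X)$ to take both values, contradicting single-valuedness of the solution.

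With these observations both directions of the biconditional follow. The direction $\models \big(\bigvee_{\models \delta \rightarrow \varphi} \delta\big) \rightarrow \varphi$ is immediate: if $M$ satisfies a disjunct $\delta$, then $\models \delta \rightarrow \varphi$ holds by the membership condition of the disjunction, whence $M \models \varphi$. For the direction $\models \varphi \rightarrow \bigvee_{\models \delta \rightarrow \varphi} \delta$, suppose $M \models \varphi$ and let $\delta_M$ be the unique element of $\Delta_\varphi$ that $M$ satisfies. Since $\delta_M$ pins down the full interventional type, any model of $\delta_M$ has the same type as $M$ and therefore also satisfies $\varphi$; thus $\models \delta_M \rightarrow \varphi$, so $\delta_M$ appears among the disjuncts, and $M \models \delta_M$ witnesses the disjunction.

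I expect the only delicate point to be the ``exactly one $\delta$'' claim, which is precisely where uniqueness and totality of solutions under (finite, hence computable) interventions is essential: absent this, a model could realize several complete descriptions, or none, and the clean decomposition would fail. Everything else is a routine check that a complete-description disjunction reconstitutes $\varphi$, so I would keep those verifications brief.
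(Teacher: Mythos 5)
Your proof is correct, and it is precisely the standard ``complete state description'' argument the paper has in mind when it declares Lem.~\ref{lem:complexity:1} straightforward (citing analogous lemmas elsewhere rather than spelling out a proof): each $\delta \in \Delta_\varphi$ fixes the full interventional type of a model over $\chi_\varphi$ and $\mathcal{L}^-_{\mathrm{int}(\varphi)}$, every model realizes exactly one such $\delta$ by existence and uniqueness of solutions under finite interventions, and the biconditional follows. You also correctly isolate the one place where the semantics does real work, namely that solutions exist and are unique (the semantic content of $\textsf{F/D}$), so nothing further is needed.
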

Where $\prec$ is a total order of $\chi_\varphi$, assign labels $\chi_\varphi = \{X_1, \dots, X_n\}$ so that $X_1 \prec \dots \prec X_n$
and let $\mathcal{M}_{\prec}$ be the class of models in which $t(X_i) = i-1$ for all $X_i \in \chi_\varphi$, and $t(X) = n$ for all $X \notin \chi_\varphi$.
\begin{lemma}
\label{lem:complexity:2}
Suppose $\alpha \in \mathcal{L}_{\mathrm{int}} \cap \mathcal{L}_{\varphi}$. Let $\ell_{X} \in L(X)$ be a choice of $X$-literal for each $X \in \chi_\varphi$ such that $\ell_X = \alpha(X)$ whenever $X \in \alpha$.\footnote{Here $\alpha$ represents an intervention thought of as a partial function; see Def.~\ref{semintervention}.}
Then $M \models [\alpha] \bigwedge_{X \in \chi_\varphi} \ell_X \leftrightarrow \bigwedge_{\substack{0 \le m < n \\ X_{m+1} \notin \alpha}} [\ell_{X_1} \land \dots \land \ell_{X_m}] \ell_{X_{m+1}}$ for every $M \in \mathcal{M}_\prec$.
\qed
\end{lemma}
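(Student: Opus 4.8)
The plan is to reduce both sides of the biconditional to arithmetic statements about the (unique) solutions of the relevant interventions, and then to match them by an induction that follows the time order $t(X_i)=i-1$. Write $b_i \in \{0,1\}$ for the value of $X_i$ that makes $\ell_{X_i}$ true. Then the left-hand side $M \models [\alpha]\bigwedge_{X\in\chi_\varphi}\ell_X$ says exactly that the solution $v$ with $\alpha(M)\models v$ satisfies $v(X_i)=b_i$ for every $i\le n$, while the $m$-th conjunct of the right-hand side (present only when $X_{m+1}\notin\alpha$) says that the solution $v_m$ with $[\ell_{X_1}\land\dots\land\ell_{X_m}](M)\models v_m$ satisfies $v_m(X_{m+1})=b_{m+1}$. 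Since every model in $\mathcal{M}$ has a unique solution under any (computable) intervention, all these solutions are well defined.

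The single structural fact I would extract first is that, for $M\in\mathcal{M}_\prec$, the function $f_{X_i}$ depends only on $X_1,\dots,X_{i-1}$: its arguments are the variables of time strictly below $t(X_i)=i-1$, and these are precisely $X_1,\dots,X_{i-1}$, since every variable outside $\chi_\varphi$ sits at time $n\ge i-1$ and is therefore invisible to $f_{X_i}$. Consequently any two total assignments agreeing on $X_1,\dots,X_{i-1}$ give $f_{X_i}$ the same value; this is the lever used in both directions.

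For the forward direction, assume $v(X_i)=b_i$ for all $i$ and fix $m$ with $X_{m+1}\notin\alpha$. Because the intervention $\ell_{X_1}\land\dots\land\ell_{X_m}$ pins each $X_j$ with $j\le m$ to $b_j$, we get $v_m(X_j)=b_j$ for $j\le m$, so $v_m$ and $v$ agree on $X_1,\dots,X_m$; by the structural fact, $v_m(X_{m+1})=f_{X_{m+1}}(v_m)=f_{X_{m+1}}(v)=v(X_{m+1})=b_{m+1}$, which is the desired conjunct. For the converse I would argue by strong induction on $i$ that $v(X_i)=b_i$. If $X_i\in\alpha$ this is immediate, since $\alpha$ pins $X_i$ and $\ell_{X_i}=\alpha(X_i)$ by hypothesis; if $X_i\notin\alpha$, the inductive hypothesis gives $v(X_j)=b_j$ for $j<i$, so $v$ agrees with $v_{i-1}$ on $X_1,\dots,X_{i-1}$, whence $v(X_i)=f_{X_i}(v)=f_{X_i}(v_{i-1})=v_{i-1}(X_i)$, and this equals $b_i$ by the $m=i-1$ conjunct of the right-hand side (present exactly because $X_i\notin\alpha$).

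I do not expect a serious obstacle: once the dependence structure of $\mathcal{M}_\prec$ is isolated, the rest is a bookkeeping induction. The only point demanding care is the interface between the ``one-shot'' intervention $\alpha$ and the telescoping prefix interventions $\ell_{X_1}\land\dots\land\ell_{X_m}$ — one must verify that at each stage the two solutions realize the same prefix $b_1,\dots,b_m$ on $X_1,\dots,X_m$ (for $\alpha$ this combines the inductive hypothesis on the free variables with the fixing of the intervened ones) so that the shared structural function $f_{X_{m+1}}$ returns the same value in both.
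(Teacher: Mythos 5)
Your proof is correct, and it is precisely the ``straightforward'' argument the paper alludes to: the paper states Lem.~\ref{lem:complexity:2} with no proof (marking it \qed\ and deferring to cited work), and the intended route is exactly what you do --- isolate the fact that in $\mathcal{M}_\prec$ each $f_{X_{i}}$ depends only on $X_1,\dots,X_{i-1}$, then run the prefix induction matching the solution of $\alpha(M)$ against the solutions of the telescoping interventions $[\ell_{X_1}\land\dots\land\ell_{X_m}]$. Your handling of the two cases ($X_i\in\alpha$ via effectiveness and the hypothesis $\ell_{X_i}=\alpha(X_i)$; $X_i\notin\alpha$ via agreement on the preceding prefix and the unmodified structural function) is complete and fills the gap the paper leaves to the reader.
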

\begin{proposition}
\label{prop:satl:np}
 Satisfiability for $\L$ is $\mathsf{NP}$-complete.
\end{proposition}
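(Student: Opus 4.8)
The plan is to handle the two directions of $\mathsf{NP}$-completeness separately, with hardness being routine and membership carrying the real content through Lem.~\ref{lem:complexity:1} and Lem.~\ref{lem:complexity:2}. For $\mathsf{NP}$-hardness I would reduce from propositional satisfiability: given a propositional formula $\psi$ over variables $p_1, \dots, p_m$, replace each $p_j$ by the conditional atom $[\top] X_j$ for distinct $X_j \in \chi$. Any Boolean assignment to the atoms $[\top]X_j$ is realizable by an SEM that assigns each $X_j$ time $0$ together with the constant structural function matching the desired truth value, so the translated formula is satisfiable in $\M$ iff $\psi$ is propositionally satisfiable. This reduction is plainly polynomial.

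For membership the key observation is that, although $|\Delta_\varphi| = 2^{|\chi_\varphi||\mathcal{L}^-_{\mathrm{int}(\varphi)}|}$ is exponential, a \emph{single} $\delta \in \Delta_\varphi$ has a polynomial-size description: it is determined by one choice of literal $\ell_X^\alpha$ for each of the at most $|\mathcal{L}^-_{\mathrm{int}(\varphi)}| \cdot |\chi_\varphi| \le |\varphi|^2$ pairs $(\alpha, X)$. By Lem.~\ref{lem:complexity:1}, $\varphi$ is satisfiable iff some $\delta \in \Delta_\varphi$ with $\models \delta \rightarrow \varphi$ is itself satisfiable. I would therefore design the nondeterministic procedure to guess (i) a total order $\prec$ of $\chi_\varphi$, determining a class $\mathcal{M}_\prec$, and (ii) such a $\delta$, both of polynomial size, and then verify in polynomial time both that $\models \delta \rightarrow \varphi$ and that $\delta$ is satisfiable over $\mathcal{M}_\prec$.

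The verification splits into two polynomial checks. First, $\delta$ fixes the complete solution $v_\alpha$ under every $\alpha \in \mathcal{L}^-_{\mathrm{int}(\varphi)}$, hence determines the truth value of every conditional atom $[\alpha]\beta$ occurring in $\varphi$ via the propositional test $v_\alpha \models \beta$; substituting these truth values reduces $\models \delta \rightarrow \varphi$ to evaluating a single Boolean formula, which is polynomial. Second, to test satisfiability of $\delta$ over $\mathcal{M}_\prec$, I would apply Lem.~\ref{lem:complexity:2} to rewrite each conjunct $[\alpha]\bigwedge_{X \in \chi_\varphi} \ell_X^\alpha$ as a conjunction of one-step conditionals $[\ell_{X_1} \land \dots \land \ell_{X_m}]\ell_{X_{m+1}}$. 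Each such conditional directly prescribes the value of the structural function $f_{X_{m+1}}$ on the partial input $(\ell_{X_1}, \dots, \ell_{X_m})$ (recall Def.~\ref{semdefinition}); rejecting $\delta$ outright if some conjunct violates effectiveness (\textsf{RW}, \textsf{R}), i.e.\ has $\ell_X^\alpha \neq \alpha(X)$ for some $X \in \alpha$, the full collection of one-step conditionals is jointly realizable by an $M \in \mathcal{M}_\prec$ precisely when no two of them assign opposite values to the same function at the same input. Since there are at most $|\mathcal{L}^-_{\mathrm{int}(\varphi)}| \cdot |\chi_\varphi|$ one-step conditionals in all, this conflict check is polynomial, and when it passes one fills in the unconstrained function values arbitrarily to obtain a witnessing model.

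The main obstacle is this second check: justifying that satisfiability of $\delta$ reduces to the purely local, polynomially verifiable consistency of its induced one-step conditionals, and that restricting to the totally ordered classes $\mathcal{M}_\prec$ loses no generality. The latter holds because any model satisfying $\varphi$ induces, through its time map, a weak order on $\chi_\varphi$ that refines to some total $\prec$, and Lem.~\ref{lem:complexity:2} then guarantees that the corresponding $\delta$ is recoverable from one-step conditionals over $\mathcal{M}_\prec$. Combining Lem.~\ref{lem:complexity:1} (completeness of the disjunction over $\Delta_\varphi$) with this equivalence closes the membership argument, and together with the hardness reduction yields $\mathsf{NP}$-completeness.
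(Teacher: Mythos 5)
Your overall strategy coincides with the paper's: guess a total order $\prec$ on $\chi_\varphi$ together with a single disjunct $\delta \in \Delta_\varphi$ (i.e., the family of literals $\ell_X^\alpha$), use Lem.~\ref{lem:complexity:1} to reduce satisfiability of $\varphi$ to satisfiability of some effectiveness-respecting $\delta$ entailing $\varphi$, use Lem.~\ref{lem:complexity:2} to rewrite $\delta$ as one-step conditionals over $\mathcal{M}_\prec$, and accept iff no two one-step conditionals with identical antecedents have opposite consequents. The hardness reduction, the Boolean-evaluation check for $\models \delta \rightarrow \varphi$, and the polynomial conflict check are all correct and are exactly the paper's certificate and verifier.

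The gap is in the step you yourself flag as the main obstacle: that restricting to the classes $\mathcal{M}_\prec$ loses no generality. Your justification is that a satisfying model $M$ induces a weak order on $\chi_\varphi$ refining to some total $\prec$, and that Lem.~\ref{lem:complexity:2} ``then guarantees that the corresponding $\delta$ is recoverable from one-step conditionals over $\mathcal{M}_\prec$.'' But Lem.~\ref{lem:complexity:2} quantifies only over models that are already in $\mathcal{M}_\prec$, and $M$ in general is not (its time map may interleave variables outside $\chi_\varphi$ at arbitrary times); appealing to it here is circular, since what you must establish is precisely that \emph{some} model of $\delta$ exists in $\mathcal{M}_\prec$. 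Concretely, completeness of your verifier needs: the $\delta$ read off from $M$ is conflict-free with respect to $\prec$, i.e., whenever two interventions $\alpha, \alpha'$ of $\varphi$ yield solutions agreeing on the prefix $X_1, \dots, X_m$ and $X_{m+1} \notin \alpha \cup \alpha'$, the solutions agree at $X_{m+1}$ as well. This is not obvious, because $f_{X_{m+1}}$ in $M$ may depend on hidden variables outside $\chi_\varphi$, and one must rule out that $\alpha$ and $\alpha'$ drive those hidden variables apart while leaving the $\chi_\varphi$-prefix equal. The claim is true, but requires its own induction on the time map: every variable (hidden or not) of time less than $t(X_{m+1})$ takes the same value under both interventions --- hidden ones because they are unintervened functions of yet-earlier variables, $\chi_\varphi$ ones because $\prec$ refines $t$ and so they lie in the agreed prefix --- whence agreement at $X_{m+1}$ follows from Defn.~\ref{semdefinition}. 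Alternatively, and this is the paper's route, one simply cites the completeness proof of Thm.~\ref{complete:ax}: a satisfiable $\varphi$ is consistent, and the canonical model constructed there lies in $\mathcal{M}_\prec$ for the order extracted from the maximal consistent set, so ``satisfiable iff satisfiable in some $\mathcal{M}_\prec$'' comes for free. Either repair closes your argument; as written, this step is missing.
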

\begin{proof}
 $\mathsf{NP}$-hardness is trivial. The proof of Thm.~\ref{complete:ax} shows that $\varphi$ is satisfiable iff satisfiable in $\mathcal{M}_\prec$ for some $\prec$.
 In any $\prec$ and for any $M \in \mathcal{M}_\prec$, Lem.~\ref{lem:complexity:1} and \ref{lem:complexity:2} imply that $M \models \varphi$ iff $M \models \bigvee_{\substack{\bigwedge_\alpha\big( [\alpha] \bigwedge_{X} \ell_X^\alpha \big) \models \varphi \\ \ell_X^\alpha = \alpha(X) \text{ for all }\alpha, X \in \alpha}} \bigwedge_{\substack{\alpha \in \mathcal{L}^-_{\mathrm{int}(\varphi)} \\0 \le m < n \\ X_{m+1} \notin \alpha}} [\ell^\alpha_{X_1} \land \dots \land \ell^\alpha_{X_m}] \ell^\alpha_{X_{m+1}}$.
Every inner conjunction here is of polynomial length, and is satisfiable iff $\ell^{\alpha}_{X_{m+1}} = \ell^{\alpha'}_{X_{m+1}}$ whenever $\ell^\alpha_{X_1} \land \dots \land \ell^\alpha_{X_m} = \ell^{\alpha'}_{X_1} \land \dots \land \ell^{\alpha'}_{X_m}$.
Thus a specification of order $\prec$ and values $\big\{\ell^\alpha_{X_{m+1}}\big\}_{\substack{\alpha, m\\ X_{m+1} \notin \alpha}}$ furnishes a polynomial certificate, with the check above accomplishable in $\mathsf{P}$. For an explicit verifier, see \cite[Lem.~16]{ibelingicard2020}.
\end{proof}

\subsection{Relation to Existing Work}

Thm. \ref{complete:ax} shows that the class of open-universe generative programs defined in \S\ref{sec:simulations} satisfies a natural set of axioms. In fact, the system \textsf{AX} encompasses all of the principles about counterfactual conditionals used in the complete identification algorithm of \cite{Shpitser}. This, together with Thm. \ref{equivalencetheorem}, lends at least some credence to the idea that these ``merely implicit'' causal models can be understood on a par with more familiar explicit causal representations such as SEMs. 

The previous literature on causal conditionals has explored classes of axioms and models that are quite different from those considered here. For instance, in addition to studying the (finite) recursive structural equation models, \cite{Halpern1998,Halpern2000} axiomatizes the class of SEMs that have a unique solution (but may not be recursive), as well as the class of all SEMs built of arbitrary equations, which may in general lack solutions. Similarly, \cite{Zhang} considers classes of SEMs with desirable \emph{sets} of solutions. Some of the central axioms for these classes make reference to all variables of the signature and thus cannot be translated into the open-universe setting. More fundamentally, it is not evident which of these models have an adequate procedural interpretation. 

An analogous generalization in our setting might be to consider simulation models that ``crash'' under certain interventions and fail to have a solution.
If we allow $\phi$ (Defn. \ref{computablesem}) to be only partial computable on $\chi \times F$, the construction from Thm. \ref{equivalencetheorem} shows that every partial SEM has an equivalent simulation model in this wider class; but simple counterexamples show the reverse direction does not hold.
Logically, only the forward direction of axiom $\textsf{F/D}$ remains sound, and we leave the question of axiomatizing this wider class for future work.

The relation $\rightsquigarrow$ has been used as a defined symbol in previous work (e.g., \citealt{Halpern2000}). Because the relevant definition again relies on reference to all variables in the signature, this is not possible in the open-universe setting. Thms. \ref{complete:ax+} and \ref{complete:ax+trans} are thus new in the present work. 

\section{Conclusion}

We have identified two equivalent classes of models---one declarative, one procedural---formalizing the notion of an \emph{open-universe causal model}. Both classes validate an intuitive and familiar set of principles about subjunctive conditionals and the relation of causal influence. This highlights an important class of \emph{implicit} generative models that can plausibly be treated as genuine causal models, on a par with (an infinitary generalization of computable, recursive) structural equation models. More detailed work is of course needed to identify concrete cases in which components of learned generative models support legitimate causal counterfactuals (see, e.g., \citealt{Besserve} for progress on this question). 

From an axiomatic perspective, it would be desirable to extend the present treatment to the full probabilistic setting, since, as remarked in \S\ref{disc}, both classes of models can be augmented with a natural probabilistic source. Axioms for probabilistic formal systems are well studied (e.g., \citealt{Fagin}). In the direction of a fully formalized do-calculus \citep{Pearl2009}, one would like to embed an axiom system like \textsf{AX} into an appropriate probability calculus, and combine these with a logic of \emph{direct} causal influence (a direct and probabilistic version of $\rightsquigarrow$), so that the do-calculus rules could be expressed and studied in a precise formal system. While the basic definitions would be clear for SEMs (and this could of course already be investigated for finite SEMs), the extension to simulation programs is less clear. Existing identifiability algorithms require specific assumptions about exogenous noise variables, e.g., that each has only two endogenous children \citep{Shpitser}. Some work would need to done to ensure that probabilistic programs (or Turing machines) satisfy analogous restrictions.  

Finally, other extensions to the languages considered here would also be natural to investigate. We studied one higher-order relation, namely $\rightsquigarrow$, which involves quantification over an infinite domain (the space of interventions). One of the advantages of open-universe models is precisely that they enable reasoning beyond the propositional level. Thus, e.g., systems for reasoning about causal and counterfactual statements involving explicit quantification (as in the examples from \S\ref{intro}) are easily motivated, and ought to be understood.


\bibliographystyle{apalike}
\bibliography{uai2019}

\end{document}